\DeclareMathAlphabet{\mathcal}{OMS}{cmsy}{m}{n}
\DeclareMathOperator{\ap}{\emph{A}\MRkern \emph{P}}
\newcommand{\MRkern}{%
  \mkern-3mu
  \mathchoice{}{}{\mkern0.2mu}{\mkern0.5mu}%
}
\newtheorem{definition}{\bf{Definition}}
\newtheorem{problem}{\bf{Problem}}
\newtheorem{theorem}{\bf{Theorem}}
\newtheorem{lemma}{Lemma}
\newcommand{\notltl}{\neg}
\newcommand{\andltl}{\wedge}
\newcommand{\orltl}{\vee}
\newcommand{\FP}{\mathcal{P}}
\newcommand{\FA}{\mathcal{A}}
\DeclareMathOperator*{\argmax}{arg\,max}
\title{\LARGE \bf
%  Probabilistically Guaranteed Satisfaction of Temporal Logic Constraints During Reinforcement Learning}

%An Automaton-Based Approach to Reinforcement Learning Under Probabilistic Constraints

Reinforcement Learning Under Probabilistic Spatio-Temporal Constraints with Time Windows}
\author{Xiaoshan Lin$^{1*}$, Abbasali Koochakzadeh$^{2*}$, Yasin Yaz{\i}c{\i}o\u{g}lu$^3$, and Derya Aksaray$^4$ %<-this % stops a space
%\thanks{*This work was partially supported by the Office of Naval Research under grants ONR N00014-14-1-0554 and MURI N00014-09-1051 and by the National Science Foundation under grants NRI-1426907 and CMMI-1400167.}% <-this % stops a space
\thanks{$^*$These authors contributed equally to the paper.}
\thanks{This paper was supported by DARPA contract HR0011-21-2-0015.}
\thanks{$^1$X. Lin is with the Department of Aerospace Engineering and Mechanics, University of Minnesota, Minneapolis, MN, 55455, {\tt\small lin00668@umn.edu}.
}
\thanks{$^2$A. Koochakzadeh is with the Department of Electrical and Computer Engineering, Purdue University, West Lafayette, IN, 47907, {\tt\small akoochak@purdue.edu}}
\thanks{$^3$Y. Yaz{\i}c{\i}o\u{g}lu is with the Department of Mechanical and Industrial Engineering, Northeastern University, Boston, MA, 02115, {\tt\small y.yazicioglu@northeastern.edu}.}
\thanks{$^4$D. Aksaray is with the Department of Electrical and Computer Engineering, Northeastern University, Boston, MA, 02115, {\tt\small d.aksaray@northeastern.edu}}
}
\begin{document}
\maketitle

\begin{abstract}
We propose an automata-theoretic approach for reinforcement learning (RL) under complex spatio-temporal constraints with time windows. The problem is formulated using a Markov decision process under a bounded temporal logic constraint. Different from existing RL methods that can eventually learn optimal policies satisfying such constraints, our proposed approach enforces a desired probability of constraint satisfaction throughout learning. This is achieved by translating the bounded temporal logic constraint into a total automaton and avoiding ``unsafe" actions based on the available prior information regarding the transition probabilities, i.e., a pair of upper and lower bounds for each transition probability.  We provide theoretical guarantees on the resulting probability of constraint satisfaction. We also provide numerical results in a scenario where a robot explores the environment to discover high-reward regions while fulfilling some periodic pick-up and delivery tasks that are encoded as temporal logic constraints. 

\end{abstract}

\section{Introduction}

Reinforcement learning (RL) has been widely used to learn optimal policies for Markov decision processes (MDPs) through trial and error. Due to its capability to learn from interactions, adapt to changes in the environment, and handle different types of tasks in a general setting, RL has been widely adopted in the field of robotics. When the system is subject to constraints, traditional RL algorithms can learn a feasible policy by penalizing actions that lead to violating the constraints (e.g., \cite{49999}, \cite{sutton2018}). However, the lack of formal guarantees on constraint satisfaction during
the training phase can potentially cause undesirable outcomes, which limits the application of RL in many real-world scenarios. %\cite{amodei2016concrete}. 

In recent years, safety in RL has been intensively studied. For example, control barrier function can be adopted to ensure safety during reinforcement learning by constraining the exploration set \cite{cheng2019} or augmenting the cost function \cite{marvi2021safe}. Safety in RL can be achieved by solving the constrained Markov decision process, where a cost function is used in a way that trajectories with expected costs smaller than a safety threshold will be considered safe (e.g., \cite{hasanzadezonuzy2021learning,Simo2021AlwaysSafeRL}, \cite{Altman1999ConstrainedMD, moldovan2012safe}). %Furthermore, safe RL obtains a policy that maximizes the expected reward among the feasible trajectories (e.g., \cite{efronijjj}).
In conjunction with policy-gradient RL algorithms, the establishment of safety frameworks using the Hamilton-Jacobi reachability method is proposed in \cite{general-safety} to guarantee safety during learning for safety-critical systems. While many studies in safe RL ensure safety by avoiding undesirable states or actions, constraints beyond safety are not explored in the aforementioned works.

Temporal logic (TL) is a specification language that can describe rich behaviors, making it one of the most useful tools for specifying complex tasks for robots. There exist some works in the literature that explore constrained RL under TL constraints. For example, the works in \cite{hasan2020,cai2023safe} encode the desired constraints as Linear Temporal Logic (LTL) in a model-free learning framework and maximize the probability of satisfying LTL constraints. In \cite{alshiekh2018}, the risk of violating LTL constraints can trigger a reactive shield system that modifies the chosen action to ensure constraint satisfaction with maximum probability. Alternatively, Gaussian Process is adopted to estimate the uncertain system dynamics to ensure safety with high probability in \cite{cai2021safe}. In \cite{aksaray2021probabilistically}, the authors propose a constrained RL algorithm with probabilistic guarantees on the satisfaction of relaxed bounded TL constraints throughout learning. 

%considers bounded TL constraints, which can describe richer behaviors with explicit time windows. type of TL to represent explicit time constraints. While most of the literature in safe RL with TL constraints aim at maximizing the probability of satisfaction, these approaches do not provide a formal guarantee of the satisfaction rate, especially in the early stages of the learning process. Since RL is usually applied on MDPs with unknown transition probabilities, some RL algorithms can formally guarantee the satisfaction of constraints, given some information on the system's transition probability. For example, by overestimating the transition uncertainty, \cite{aksaray2021probabilistically} guarantees the probability of satisfying a relaxed bounded TL constraint is always greater than a desired threshold. In \cite{cai2021safe}, Gaussian Process is adopted to estimate the uncertain system dynamics to ensure safety with high probability.

This paper addresses a  constrained RL problem as in \cite{aksaray2021probabilistically}. The objective is to learn a policy that ensures the satisfaction of a bounded TL constraint with a probability greater than a desired threshold in every episode. The main differences of this paper from \cite{aksaray2021probabilistically} are as follows: 1) The method in \cite{aksaray2021probabilistically} has guarantees on satisfying a temporal relaxation of the TL constraint. On the other hand, we provide guarantees on the satisfaction of TL constraint without any relaxation. 2) The method in \cite{aksaray2021probabilistically} is obtained by first deriving a closed-form lower bound on the probability of constraint satisfaction, which is applicable under a more restrictive setting in terms of the MDP, TL constraint, and available prior information. For instance, that method is not applicable under TL constraints such as ``globally stay in a specified region". In this paper, we significantly relax these limitations and, instead of a closed-form expression, we provide a recursive algorithm for computing a lower bound on the probability of constraint satisfaction in this generic setting.

\section{Preliminaries: Bounded Temporal Logic}
Temporal logic (TL) is a formal language to specify the temporal property of a system. Bounded TL such as Bounded Linear Temporal Logic (\cite{ishii2015}), Interval Temporal Logic \cite{10.1007/3-540-63010-4_6}, and Time Window Temporal Logic \cite{twtl} are expressive languages, as they can specify explicit time constraints (e.g., visit region $B$ in time-steps 0 to 2). Such bounded TL specifications can be translated to a deterministic finite state automaton (FSA).
\vspace{-2mm}
\begin{definition} [FSA]
A deterministic finite state automaton $\FA$ is a tuple $(Q, Q_{init}, \Sigma, \delta, F_{\FA})$ where
\begin{itemize}
\item $Q$ is a finite set of states;
\item $Q_{init} \subseteq Q$ is a set of initial states;
\item $\Sigma$ is a finite set of inputs;
\item $\delta : Q \times \Sigma \rightarrow Q$ represents the state transition relations;
\item $F_{\FA} \subseteq Q$ is a set of accepting states;
\end{itemize}
\end{definition}

%$\FA = (Q, q_{init}, 2^{\ap}, \delta, F_{\FA})$ where $Q$ is a finite set of states, $q_{init} \in Q$ is the initial state, $2^{\ap}$ is the input alphabet, $\delta: Q \times 2^{\ap} \rightarrow Q$ is the transition function, $F_{\FA} \subseteq Q$ is the set of accepting states. 
%An example FSA for a bounded TL specification ``Stay at region B for one time step within a time interval [0,2]" is illustrated in Fig. \ref{fig:FSA}(a), where the self-loop in the accepting state allows for the continuation of the mission after the satisfaction. 

Moreover, temporal relaxation of bounded TL can also be defined \cite{twtl}. A temporally relaxed formula is mainly the same formula structure with some extended or shrunk time windows. In that case, an FSA that encodes all temporal relaxations of a formula includes some backward edges and self-loops in non-accepting states. For example, consider a specification as ``Stay at region B for one time step within a time interval [0,2]" and its temporally relaxed version is ``Stay at region B for one time step within a time interval [0,2$+\tau$]". The illustrations of the original FSA and an FSA  encoding all temporal relaxations are shown in \ref{fig:FSA}. Note that the self-loops in the accepting states allow for the continuation of the mission after the satisfaction.

%Bounded TL specifications can either be represented as an FSA that encodes both violative and acceptable trajectories or a relaxed FSA by applying temporal relaxation (\cite{VASILE201727,7487481}). Fig. \ref{fig:FSA} illustrates the FSA of a bounded TL formula and its temporally relaxed version. %In opposition to relaxation, any TL language with an FSA representation can be represented by a total  Fig. \ref{fig:totalfsa} illustrates a total FSA for the described specification $\phi$.

\begin{figure}[hb]
    \centering
    %% %\vspace{0.3cm}
    \hspace*{-4cm}
    \subfigure[FSA]{\includegraphics[width=0.38\columnwidth]{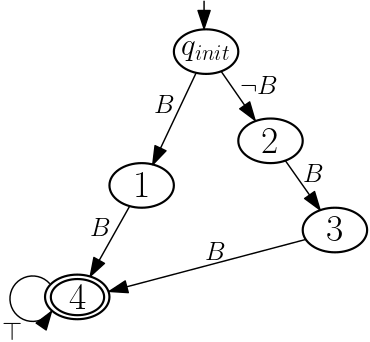}} 
    \hspace{1cm}
    \subfigure[Relaxed FSA]{\includegraphics[width=0.23\columnwidth]{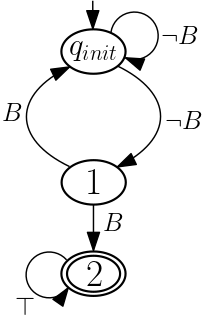}}
    \hspace*{-4cm}    
    \caption{(a) The FSA of specification $\phi:=$ ``Stay at region B for one time step within a time interval [0,2]". $Q=\{q_{init},1,2,3,4\}$, $\Sigma=\{B,\neg B\}$ where $\neg B$ refers to the negation of $B$, $\top$ is the true constant, $\delta(q_{init},B) = 1$, $\delta(q_{init},\neg B) = 2$, $\delta(1,B)=4$, $\delta(2,B)=3$, $\delta(3,B) =4$, $F_{\FA} = \{4\}$. (b) The relaxed FSA illustrates ``Stay at region B for one time step within a time interval [0,2$+\tau$]", for all feasible $\tau \in \mathbb{Z}$.}
    \label{fig:FSA}
\end{figure}
 \vspace{-8mm}
\begin{definition}
[Word, Accepting Word, Language] A word $\mathbf{o}=o(1) o(2) \dots$ is a sequence where $o(i)\in \Sigma$ for all $i \geq 1$. Any path over an FSA constitutes a word based on the respective edge labels. A path that starts from the initial state of the FSA and ends at an accepting state results in a satisfactory sequence, which is called an accepting word. The set of all accepting words is called the language of the corresponding FSA.
\end{definition}

Note that the original FSA only encodes the accepting words. In order to track the violation cases, one can also construct a total FSA.
\begin{definition}
[Total FSA] An FSA is called total if for all $q \in Q$ and any event $\sigma \in \Sigma$, the transition $\delta(q, \sigma) \neq \emptyset$ \cite{lin2015hybrid}.
\end{definition}
For any given FSA $\FA$, it is always possible to obtain a language-equivalent total FSA by introducing a trash state and adding a transition $\delta(q, \sigma)=\left\{trash\right\}$ if and only if $\delta(q, \sigma)=\emptyset$ in $\FA$. Fig.~\ref{fig:totalfsa} illustrates the total FSA for the specification ``Stay at region B for one time step within a time interval [0,2]". In Fig.~\ref{fig:totalfsa}, any path that starts from $q_{init}$ state and ends at state $4$ results in an accepting word; whereas any path that starts from $q_{init}$ state and ends at the trash state results in a rejecting word encoding a failure case.

%\vspace{-2mm}
%\begin{definition} {(Total FSA)
%A total finite state automaton $\FA$ is a tuple ($Q$, $Q_0$, $\Sigma$, $\delta$, $F$, $\Psi$) where
%\begin{itemize}
%\item $Q$ is a finite set of states;
%\item $Q_0 \subseteq Q$ is a set of initial states;
%\item $\Sigma$ is a finite set of inputs;
%\item $\delta : Q \times \Sigma \rightarrow 2^Q$ represents the state transition relations;
%\item $F \subseteq Q$ is a set of accepting states;
%\item $ \Psi \subseteq Q \setminus F$ is a set of trash states, from which no accepting state in $F$ is reachable;
%\end{itemize}}
%\vspace{-2mm}
%\end{definition}
\begin{figure}[htb!]
\begin{center}
\vspace*{-2mm}
\includegraphics[width=0.38\columnwidth]{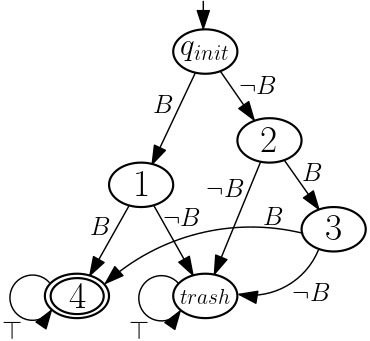}%
\caption{The total FSA $\FA=(Q,q_{init},\Sigma, \delta, F_{\FA}, \Psi)$ of the bounded TL specification ``Stay at region B for one time step within a time interval [0,2]". $Q=\{q_{init},1,2,3,4,trash\}$, $\Sigma=\{B,\neg B\}$ where $\neg B$ refers to the negation of $B$, $\top$ is the true constant, $\delta(q_{init},B) = 1$, $\delta(q_{init},\neg B) = 2$, $\delta(1,B)=4$, $\delta(1,\neg B)=trash$, $\delta(2,B)=3$, $\delta(2,\neg B)=trash$, $\delta(3,B) =4$,$\delta(3,\neg B)=trash$, $F_{\FA} = \{4\}$, and $\Psi=\{trash\}$.}
\label{fig:totalfsa}
\vspace{-4mm}
\end{center}
\end{figure}

\color{black}
\vspace{-3mm}
\section{Problem Statement}
We consider a Markov Decision Process (MDP) as $M=(S,A,\Delta_M,R)$, where $S$ is the set of states, $A$ is the set of actions, $\Delta_M:S \times A \times S \rightarrow [0,1]$ is the transition probability, and $R: S \times A \rightarrow \mathbb{R}$ is a reward function. We assume that every state $s \in S$ has a set of labels, and the labeling function $l:S \rightarrow 2^{\ap}$ maps every ${s \in S}$ to the power set of atomic propositions $\ap$. Fig.~\ref{fig:mdp} shows an example MDP with a set of atomic propositions and a labeling function. 
\begin{figure}[htb!]
\begin{center}
    \includegraphics[width=0.7\columnwidth, trim={0 0cm 0 0cm},clip]{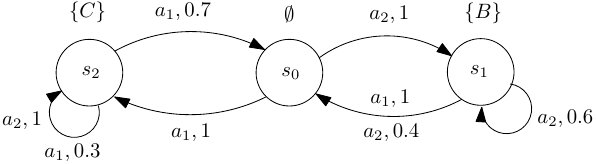}
% \resizebox*{0.8\columnwidth}{!}{\includegraphics{}}%
\caption{An MDP where $S=\{s_0,s_1, s_2\}$, $A=\{a_1,a_2\}$, $\ap=\{B,C\}$, $l(s_{2})=\{C\}$, $l(s_0) = \emptyset$, $l(s_1)=\{B\}$. An edge from $s_i$ to $s_j$ under action $a_k$, i.e., $\Delta_M(s_i,a_k,s_j)$, induces the corresponding transition with the respective transition probability.  }
\label{fig:mdp}
\vspace{-4mm}
\end{center}
\end{figure}

Given an MDP, $\pi:S\rightarrow A$  is called a policy. In RL, the transition probability $\Delta_M$ is unknown, and the agent is required to find the optimal control policy $\pi^*$ that maximizes the expected sum of rewards, i.e., $E^{\pi}\Big[\sum_{t=0}^T \gamma^t r(t)\Big]\text{ or } E^{\pi}\Big[\sum_{t=0}^\infty \gamma^t r(t)\Big]$ where $r(t)$ is the reward collected at time $t$ and $\gamma \in (0,1]$ is a discount factor. In the literature, various learning algorithms, e.g., Q-learning \cite{watkins1992}, were shown to find the optimal policy.

In this paper, we address the problem of learning a policy such that a bounded TL specification is satisfied with a probability greater than a desired threshold throughout learning.  
 To provide a probabilistic satisfaction guarantee on the bounded TL constraint throughout the learning process, we assume some prior information about the transition probabilities of the MDP, even though the exact transition probabilities are unknown. We consider a generic setting where the prior information is encoded as lower and upper bounds on the transition probabilities. More specifically, for any two states $s,s' \in S$ and any action $a\in A$, let the probability of transitioning from $s$ to $s'$ under the action $a$ be bounded as:
\begin{equation}
\label{not_really_an_assumption}
\Delta_{min}(s,a,s') \leq \Delta_M(s,a,s') \leq \Delta_{max}(s,a,s'),
\end{equation}
where $\Delta_{min}:S \times A \times S \rightarrow [0,1]$ and $\Delta_{max}:S \times A \times S \rightarrow [0,1]$ are the given lower and upper bounds, respectively. %\textcolor{red}{Such lower and upper bounds can come from prior information and domain knowledge, or be obtained by estimating the confidence interval of the transition probabilities \cite{calinescu2015formal, zhao2019probabilistic}. } 
The case where no prior information is available can be encoded in this setup as $\Delta_{min}(s,a,s')=0$ and $\Delta_{max}(s,a,s')=1$ for every tuple $(s,a,s')$. These intervals become narrower if more accurate information about the transition probability is provided\footnote{Such lower and upper bounds can come from prior information and domain knowledge, or be obtained by estimating the confidence interval of the transition probabilities \cite{calinescu2015formal, zhao2019probabilistic}. }, and ultimately $\Delta_{min}(s,a,s')=\Delta_{max}(s,a,s')$ if $\Delta_M(s,a,s')$ is known exactly. %The problem addressed in this paper is formally presented below.

%In a traditional reinforcement (RL) algorithm , the goal of the agent is to learn the best policy in an environment by trial and error. %It simply learns from the experience, i.e., the agent needs to reach a state causing a violation so that it could learn that state is not satisfying the specification. 
%However, this approach might not be feasible in real-world missions. For instance, some missions require safety constraints (e.g.,  never run out of fuel). Also, there might be some other cases where the agent has to complete a persistent task (e.g.,  periodically  visit some specific regions) during the learning. Furthermore, certain tasks may have the highest priority among other interests. In that case, the agent would continually need to complete the given task while learning the environment in each episode.  The conventional RL algorithms does not offer any solutions for such systems.
%of the constraint satisfaction concerns as guaranteeing not to violate the constraints in real-life applications is the highest priority among the other interests. 
%In this paper,  we propose a novel learning algorithm that enables an agent to learn optimal policy to maximize a cumulative reward function while it persistently guarantees to complete 
%complex tasks during all training episodes. %This problem can be formulated as follows:

%\subsection{Problem Statement}
 
\vspace{-2mm}
\begin{problem} 
Suppose that the following are given:
\begin{itemize}
\item an MDP $M=(S,A,\Delta_M,R)$, where the transition probability $\Delta_M$ and reward function $R$ are unknown,
\item $\Delta_{min}:S \times A \times S \rightarrow [0,1]$ and $\Delta_{max}:S \times A \times S \rightarrow [0,1]$ such that \eqref{not_really_an_assumption} holds for every tuple $(s,a,s')$,
\item a set of atomic propositions $\ap$, 
\item a labeling function $l:S \rightarrow 2^\emph{$\ap$}$,
\item a bounded TL specification (constraint) $\phi$ 
\item a desired probability of satisfaction ${Pr_{des} \in (0,1]}$.
\end{itemize}
Learn the optimal control policy
\begin{equation}
\label{pistareq}
\pi^* = \arg\max\limits_{\pi } E^{\pi}\Big[\sum_{t=0}^\infty \gamma^t r(t) \Big] 
\end{equation}
such that, in each episode $m\geq1$ of the learning process,
\begin{equation}
\label{eq:Qobj}
\begin{aligned}
%\Pr\big(\mathbf{o}(m(T+1)),\mathbf{o}(m(T+1)+1),\dots,\mathbf{o}(m(T+1)+T) \models \phi\big) \color{black} &\geq Pr_{des},
\Pr(\mathbf{o}_m,\phi)\geq Pr_{des},
\end{aligned}
\end{equation}
where $\mathbf{o}_m=o_m(1) o_m(2) \dots o_m(T)$ is a word based on the sequence of MDP states $s_m(t)$ observed in the $m^{th}$ episode, i.e., $o_m(t) = l(s_m(t))$, $T$ is the episode length determined by the time bound\footnote{The time bound of $\phi$ is the maximum time needed to satisfy it \cite{twtl}.} of $\phi$, $\Pr(\mathbf{o}_m,\phi)$ is the probability of satisfying $\phi$ by the word $\mathbf{o}_m$, and $\gamma \in (0,1)$ is the discount factor.  

%where $T$ is the time bound\footnote{The time bound of $\phi$ is the maximum time needed to satisfy it \cite{VASILE201727}.} of $\phi$, $\mathbf{o}(m(T+1)),\mathbf{o}(m(T+1)+1),\dots,\mathbf{o}(m(T+1)+T)$ is an output word based on the sequence of MDP states $s(t)$ observed in the $m^{th}$ episode, i.e., $\mathbf{o}(t) = l(s(t))$, and $\gamma \in (0,1)$ is the discount factor.
\label{problem}
\end{problem}

% {\color{blue}
% \begin{assumption}
% \label{assm}
%  Given some $p_1$ and $p_2$ such that $[p_1,p_2] \in [0,1)$, for each state $s$ and action $a$ of the MDP, the states $s^'$ such that $Pr(\Delta_M(s,a,s^\prime)) \in [p_1,p_2]$ and the states $s^{''}$ such that $Pr(\Delta_M(s,a,s^{\prime\prime})) \notin [p_1,p_2]$ are known.
% \end{assumption}
% Note that Assumption \ref{assm} does not require knowing the actual transition probabilities. Instead, it only requires knowing which transitions are feasible (i.e. transitions to the set of $s^{'} \cup s^{''}$ states) and which of those feasible transitions are sufficiently likely to occur for each state-action pair within an interval $[p_1,p_2]$.
% }

%if the reward is designed in a manner that captures both the information in the environment and the pick up and delivery task, then a standard reinforcement learning algorithm can be utilized to learn a policy that maximizes the sum of reward as well as satisfying the constraint. However, while such a policy can be learned, there is no guarantee on the constraint satisfaction during learning. This paper is different than the state-of-the art techniques by providing constraint satisfaction guarantee during every episode of the learning process. Moreover, the proposed idea can accommodate rich and complex specifications (including spatial, temporal, and logical relations) that can be expressed by bounded temporal logics with automaton representation. %\footnote{Signal Temporal Logic is also a bounded temporal logic but it does not have an automaton representation}.   

\section{Proposed Approach}

% Learning with TL objectives or constraints cannot be
% achieved over a standard MDP which contains only the
% agent’s current state $s(t)$. This is due to the fact that the decision at any time $t$ should also depend on the how close the agent is to the satisfaction/violation of the TL specification. In this regard, a common approach is to construct a new MDP that keeps track of not only $s(t)$ but also the progress with respect to the TL specification (e.g., by using a suitable automaton representation) and the time (for bounded TL specifications). For bounded TL specifications, the progress towards satisfaction can be encoded via its FSA representation, which encodes all the accepting words. However, in a stochastic setting, the agent is not always guaranteed to make progress toward satisfaction. Hence, for such 

Learning with TL constraints typically involves an MDP that encodes both the state information and progress towards the constraint satisfaction. 
A common approach is to construct a product MDP from the FSA encoding the TL constraint and the MDP encoding the state information. A similar approach was considered in \cite{aksaray2021probabilistically}, where a product MDP was constructed by using the relaxed FSA of the desired TL specification. While the use of relaxed FSA provides the benefit of a more compact representation (improving the scalability of learning), it also allows for learning control policies that result in trajectories satisfying not only the original specification but also its temporally relaxed version. In this paper, we focus on the problem of learning control policies that only generate trajectories ensuring the satisfaction of the original specification. To this end, we utilize total FSA instead of relaxed FSA. 

\vspace{-2mm}
\begin{definition} (Total Product MDP)
Given an MDP $M=(S,A,\Delta_M,R)$, a set of atomic propositions \emph{$\ap$}, a labeling function $l:S \rightarrow 2^\emph{$\ap$}$, and a total FSA $\FA=(Q, Q_{init}, 2^{\ap}, \delta, F_{\FA}, q_{trash})$, a product MDP is a tuple $\FP =M \times \FA = (S_{\FP},P_{init},A,\Delta_\FP, R_{\FP}, F_{\FP}, \Psi_{\FP})$, where

\begin{itemize}
\item $S_\FP = S \times Q$ \textit{is a finite set of states;}
\item $P_{init}=\{(s,\delta(q_{init},l(s)) \mid \forall s \in S \}$ is the set of initial states;
\item $A$ \textit{is the set of actions;}
\item $\Delta_\FP : S_\FP \times A \times S_\FP \rightarrow [0,1]$ \textit{is the probabilistic transition relation such that for any two states, $p=(s,q) \in S_\FP$ and $p^{\prime}=(s^{\prime},q^{\prime}) \in S_\FP$, and any action $a \in A$, $\Delta_\FP(p,a,p^\prime)=\Delta_M(s,a,s^\prime)$ and $\delta(q,l(s'))=q^\prime$}; 
\item $R_{\FP}:S_\FP \times A \rightarrow \mathbb{R}$ \textit{is the reward function such that $R_{\mathcal{P}}(p,a) = R(s,a)$ for $p=(s,q) \in S_\FP$;}
\item $F_\FP = (S \times F_\FA) \subseteq S_\FP$ \textit{is the set of accepting states.}
% \item \color{red}{trash states?}
\item {$\Psi_{\FP} = (S \times q_{trash}) \subseteq S_\FP$ \textit{is the set of trash states.}}
\end{itemize}
\end{definition}

% In Problem~\ref{problem}, the time bound of the TL constraint $\phi$ determines the episode length, and the remaining episode time is also critical to the optimal action selection. Again, consider the example in Fig.~\ref{fig:productmdp-mot}. This time, let ``eventually visit A and then visit B" be a constraint during learning. If the agent shown in triangle has not satisfied this constraint yet and the episode has just started, it can either explore the environment by selecting an admissible action or make progress towards the satisfaction of the constraint by selecting the green arrow. Now, suppose that the agent is at the same state, the constraint has not been satisfied yet, but the episode is about to finish. In that case, the agent must pick the green arrow for constraint satisfaction. Overall, learning under a bounded TL constraint needs to be achieved over a space that encodes the physical state, the automaton state, and the remaining episode time. %Hence, we define a time-product MDP.   

The knowledge of remaining time is also crucial in deciding actions under bounded time constraints. For example, suppose that a robot must visit region A before a specified deadline, and it has not visited region A yet. If there is a small amount of time left, the optimal action would be moving towards A. However, if there is more remaining time, the optimal action might be moving towards other regions that yield higher rewards. For this reason, we construct a time-total product MDP.    
%also requires the knowledge of remaining time.  satisfaction According to Problem~\ref{problem}, each learning episode's length is extracted from the time bound of the TL constraint $\phi$. Also, in order to maximize the reward, the remaining episode time matters. Furthermore, a space that encodes the physical state, the automaton state, and the remaining episode time are needed in order to accomplish the learning procedure under a TL constraint. %Thus, a time-total product MDP is defined as follows.
\vspace{-2mm}
\begin{definition} (Time-Total Product MDP)
Given a product MPD $\mathcal{P}=(S_{\FP},P_{init},A,\Delta_\FP, R_{\FP}, F_{\FP}, \Psi_{\FP})$ and a time set $\mathcal{T}=\{0,\dots,T\}$, a time-product MPD is a tuple $\mathcal{P}^{\mathcal{T}} = \mathcal{P} \times \mathcal{T} =(S^{\mathcal{T}}_{\mathcal{P}}, P^{\mathcal{T}}_{init}, A, \Delta^{\mathcal{T}}_\FP, R^{\mathcal{T}}_{\mathcal{P}}, F^{\mathcal{T}}_{\mathcal{P}}, {\Psi}^{\mathcal{T}}_{\mathcal{P}})$ where
\begin{itemize}
\item $S^{\mathcal{T}}_{\mathcal{P}} = S_{\mathcal{P}} \times \mathcal{T}$ is a finite set of states;
%\item $p^t=(p,t) \in S^{\mathcal{T}}_{\mathcal{P}}$ is a time product MDP state at the time instance $t \in \mathcal{T} $;
\item $P^{\mathcal{T}}_{init} = P_{init} \times \{0\} \subseteq S^{\mathcal{T}}_{\mathcal{P}}$ is the set of initial states;% where $ p^{\mathcal{T}}_{init} \subseteq S^{\mathcal{T}}_{\mathcal{P}}$;
\item $A$ is the set of actions;
\item $\Delta_\FP^{\mathcal{T}} : S^{\mathcal{T}}_{\mathcal{P}} \times A \times S^{\mathcal{T}}_{\mathcal{P}} \mapsto [0,1]$ is the probabilistic transition relation such that $\Delta_\FP^{\mathcal{T}}(p_i^t,a,p_j^{t+1})=\Delta_\FP(p_i,a,p_j)$ for an action $a \in A$ and two time-total product MDP states $p_i^t=(p_i,t) \in S^{\mathcal{T}}_{\mathcal{P}}$ and $p_j^{t+1}=(p_j,t+1) \in S^{\mathcal{T}}_{\mathcal{P}}$;
% {\color{blue} \item $D_\FP^{\mathcal{T}}(p_i^t,a) : S^{\mathcal{T}}_{\mathcal{P}} \times A$ is the set of all time-total product MDP states $p_j^{t+1}=(p_j,t+1) \in S^{\mathcal{T}}_{\mathcal{P}}$ reachable from $p_i^t=(p_i,t) \in S^{\mathcal{T}}_{\mathcal{P}}$ by taking action $a\in A$;}
\item $R^{\mathcal{T}}_{\mathcal{P}}  : S^{\mathcal{T}}_{\mathcal{P}} \times A \mapsto \mathbb{R}$ is the reward function such that $R^{\mathcal{T}}_{\mathcal{P}}(p^t ,a)=R_{\mathcal{P}}(p,a)$ and $p^t=(p,t) \in S^{\mathcal{T}}_{\mathcal{P}}$;
\item $F^{\mathcal{T}}_{\mathcal{P}} = (F_{\mathcal{P}} \times \mathcal{T}) \subseteq S^{\mathcal{T}}_{\mathcal{P}}$ is the set of accepting states.
\item ${\Psi}^{\mathcal{T}}_{\mathcal{P}} = (\Psi_{\mathcal{P}} \times \mathcal{T}) \subseteq S^{\mathcal{T}}_{\mathcal{P}}$ is the set of trash states.
\end{itemize}
\label{def:tmdp}
\end{definition}

\vspace{-2mm}
Note that the time set in constructing the time-total product MDP is selected according to the time bound of the constraint $\phi$ as defined in Problem~\ref{problem}. A time-total product MDP state with a time element $T$ should be either an \emph{accepting} or a \emph{trash} state. Moreover, the lower and upper bounds of the transition probabilities as shown in \eqref{not_really_an_assumption} can be projected on the time-total product MDP as follows:

\vspace{-3mm}
%with a time-set of $\mathcal{T}=\{0,\dots,T\}$, states with the time notion of $T$ have to be either $accepting$ or $trash$ states. Also considering the definition above, the set of initial states can be progressed based on the label observation methodology. 

%Moreover, we can update the FSA state based on either the current MDP state label or the next MDP state label. The next observation method (our choice) requires having initial total-product MDP states whose notion of FSA states are representing the progress after observing the MDP state label instead of the initial states of FSA. Otherwise, we might have some states which are neither $accepting$ nor $trash$ at $t=T$. In general at each time $t$ and for some time-total product MDP states $p_i^{t}=(s_i,q_i,t)$ and $p_j^{t+1}=(s_j,q_j,t+1)$, where $s_i$ and $s_j$ denote the MDP states, and $q_i$ and $q_j$ denote the FSA states. Furthermore, if in the FSA $\delta(q_i,l(s_j))=q_j$, then since in time-total product MDP we have transitions probability same as in the MDP, from \eqref{not_really_an_assumption} we can derive
\begin{equation}
\label{not_really_an_assumption_tpmdp}
\Delta_{min}(p^{t}_i,a,p_j^{t+1}) \leq 
\Delta_\FP^{\mathcal{T}}(p^{t}_i,a,p_j^{t+1})
\leq \Delta_{max}(p_i^{t},a,p_j^{t+1}),
\end{equation}

where 
\begin{equation}
\small
\begin{split}
  & \Delta_{min}(p^{t}_i,a,p_j^{t+1}) =
    \begin{cases}
      \Delta_{min}(s_i,a,s_j) & \text{if $\delta(q_i,l(s_j))=q_j$ }\\
      0 & \text{else}
    \end{cases}  \\
  & \Delta_{max}(p^{t}_i,a,p_j^{t+1}) =
    \begin{cases}
      \Delta_{max}(s_i,a,s_j) & \text{if  $\delta(q_i,l(s_j))=q_j$ }\\
      0 & \text{else}
    \end{cases}
\end{split}
\end{equation}

\subsection{Evaluation of the Minimum Probability of Satisfaction}  
Given a time-total product MDP and lower/upper bounds of transition probabilities as in \eqref{not_really_an_assumption_tpmdp}, we propose an algorithm based on backward recursion to compute a lower bound on the probability of constraint satisfaction (i.e., the probability of reaching the set of $accepting$ states in the remaining time) for each state $p_i^t \in S^{\mathcal{T}}_{\mathcal{P}}$, denoted as $f(p_i^t) \in [0,1].$ Since any time-total product MDP states at $t=T$ is either a $trash$ or $accepting$ state, we can set the lower bound on constraint satisfaction for all those states as:
\begin{equation}\label{v(s)_T}
f(p_i^{T}) = \left\{\begin{array}{ll} \mbox{1, if $p_i^{T} \in F^{\mathcal{T}}_{\mathcal{P}}$,} \\ \mbox{0, if $p_i^{T} \in {\Psi}^{\mathcal{T}}_{\mathcal{P}}$.}\end{array}\right.
\end{equation}
Staring with these values, the proposed algorithm computes $f(p_i^{t})$ recursively for all $0\leq t \leq T-1$ as follows. For any action $a \in A$, let 
\begin{equation}
\label{N_definition}    
N(p_i^{t},a) = \{p_j^{t+1} \in S^{\mathcal{T}}_{\mathcal{P}} \mid \Delta_{max}(p_i^{t},a,p_j^{t+1})>0\}
\end{equation}
denote the set of states that may be reached from $p_i^{t}$ by taking action $a$ and let $n=|N(p_i^{t},a)|$. Then, for each state-action pair $(p_i^{t},a)$, we can obtain a lower bound on the probability of reaching an accepting state from $p_i^{t}$, conditioned on first taking action $a$. We obtain this lower bound by considering the ``worst-case", i.e., transitioning to states in $N(p_i^{t},a)$ with the smallest value of $f(p_i^{t+1})$ with the maximum possible probability.  More specifically, we formulate the following linear optimization problem to solve for a lower bound on the probability of reaching the accepting states from state $p_i^{t}$ by first taking action $a$:
%\sout{Note that having more information about the system (i.e. knowing a narrower bound than $[0,1]$ for each transition's interval  $[\Delta_{min}(\cdot),\Delta_{max}(\cdot)]$) may force some transition probability to zero, thus cause having fewer states in the neighbor set $N(p_i^{t},a)$.}
% Let $P^{t-1}=\{p^t_1,p^t_2,...,p^t_{|S|}\}$ denote the set of states $p_i^{t} \in S^{\mathcal{T}}_{\mathcal{P}}$ to which there are transitions from state $p^{t-1}$,
% In the most generic case, we have no information to use for separating feasible and infeasible MDP transitions, $\Delta_M$, from each other. Hence in such case, $N(p_i^t,a)$ will include all states $P^{t}$ whose
% where $S$ is the set of MDP states and $1\leq i\leq|S|$. Note that definition of $P^{t-1}$ implies that transition between every two MDP states is considered, hence we have a constant $|P^{t-1}|=|S|$, and independent of pair $(p^{t-1},a)$ selection. Furthermore, having more information about the system (i.e. knowing a narrower bound than $[0,1]$ for each transition's $[\Delta_{min}(\cdot),\Delta_{max}(\cdot)]$) may force some transition probability to zero.
% Now, for $p^{t-1}$, let  $F(p^{t-1})=[f(p^t_0),f(p^t_1),...,f(p^t_{|S-1|})]$ be a vector containing the the function $f(\cdot)$ results for all next states $p^t_j$ for $1\leq j\leq |S-1|$.
% $D(p^{t-1}_i,a)=[\hat{\Delta}_{i,0},\hat{\Delta}_{i,1},\hdots,\hat{\Delta}_{i,|S-1|}]^T$ minimizes $F(p^{t-1}_i).D(p^{t-1}_i,a)$. i.e. for each $p^{t-1}_i\in p^{t-1}$ we have
\footnotesize
 \begin{mini!}|s|[2]  
% mini! = minimize 
    {\hat{\Delta}_{1},\hat{\Delta}_{2},\hdots,\hat{\Delta}_{n}}                               % optimization variable
    {\sum_{j=1}^{n} f(p^{t+1}_{j}) \hat{\Delta}_{j} \label{eq:eq1}}   % objective function and label
    {\label{opt_problem}}  % label for optimizatio problem
    {\kappa(p_i^{t},a) =}                               % optimization result
    \addConstraint{\sum_{j=1}^{n} \hat{\Delta}_{j}}{=1 \label{eq:con1}}    % constraint 1
    \addConstraint{ \Delta_{min}(p_i^{t},a,p^{t+1}_j) \leq \hat{\Delta}_{j} }{\leq \Delta_{max}(p_i^{t},a,p^{t+1}_j), \label{eq:con2}}  % constraint 2
\end{mini!}
\normalsize

\noindent where $\Delta_{min}(p_i^{t},a,p^{t+1}_j)$ and $\Delta_{max}(p_i^{t},a,p^{t+1}_j)$ are lower and upper bound transition probabilities as in \eqref{not_really_an_assumption_tpmdp}.  Finally, for each state $p_i^{t}$ we define $f(p_i^{t})$ as
\begin{equation}
\label{opt_max_kappa}
f(p_i^{t}) = \max_{a\in A} \kappa(p_i^{t},a), 
\end{equation}
where $f(p_i^{t})$ denotes the largest lower bound on the probability of constraint satisfaction that can be achieved by properly choosing the action to take at $p_i^{t}$. 
 Note that once the values $f(\cdot)$ for states at time step $t$ are given, values $f(\cdot)$ for states at time step $t-1$ can be obtained by solving the optimization problem in \eqref{opt_problem} and using \eqref{opt_max_kappa}.

\subsection{Reinforcement Learning with Constraint Satisfaction}
%This section will present the proposed one-shot pruning algorithm and modified Q-learning that solves Problem 1. We prove  the proposed algorithms guarantee that the probability of satisfying the constraint is at least $Pr_{des}$ in each episode during learning.
We first define a policy $\pi_C$ that drives the agent towards the set of accepting states $F_{\FP}^{\mathcal{T}}$ by choosing the actions that yield the maximum lower bound $\kappa(\cdot)$ for each state. 
\vspace{-2mm}
\begin{definition}
[Go-to-$F_{\FP}^{\mathcal{T}}$ Policy] Given a time-total product MDP with the lower/upper bounds on transition probabilities as in \eqref{not_really_an_assumption_tpmdp}, Go-to-$F_{\FP}^{\mathcal{T}}$ policy $\pi_{C}: S_{\FP}^{\mathcal{T}} \rightarrow A $ is defined as
\begin{equation}
\label{policy_equation}  
\pi_{C}(p_i^{t})= \argmax \limits_{a \in A}\kappa(p_i^{t},a), \; \forall p_i^t\in S_{\FP}^{\mathcal{T}}.
\end{equation}

% to achieve $v(p^{t})$ as noted in \eqref{opt_max_kappa}.
% returns an action $a^* \in A_{p^t}$ in which $\kappa(p^t,a^*) = v(p^t)$.
\end{definition}

\begin{lemma} Given a time-total product MDP $\FP^{\mathcal{T}}$ with known lower and upper bounds defined in \eqref{not_really_an_assumption_tpmdp} and time bound $T$, for any $p_i^t \in  S_{\FP}^{\mathcal{T}}$, let ${\Pr\big(p_i^t \rightarrow F_\FP^{\mathcal{T}};\pi_{C} \big)}$ be the probability of reaching the set of accepting states $F_{\FP}^\mathcal{T} \subseteq S_{\FP}^\mathcal{T}$ from $p_i^t$ in the next $T-t$ time steps under the policy $\pi_{C}$. Then
% If $d^\epsilon(p^t) < \infty$ for every $p^t \in  S_{\FP}^{\mathcal{T}}$, then
\begin{equation}
\label{lower-bound}
\Pr\big(p_i^t \rightarrow F_\FP^{\mathcal{T}};\pi_{C} \big) \geq f(p_i^t),
% \sum\limits_{i=0}^{\lfloor \frac{k-d^\epsilon(p^t)}{2}\rfloor}  \frac{k!}{(k-i)!i!} \epsilon^i (1-\epsilon)^{k-i},
\end{equation}
where $f(p_i^t)$ is derived from \eqref{opt_max_kappa}.
% the greatest solution of the optimization problem for each state action pair $(p^t,a) \mid a \in A$.
% for every state $p^t \in S_{\FP}^\mathcal{T}$ such that $k \geq d^\epsilon(p^t)$.
\label{lemma}
\end{lemma}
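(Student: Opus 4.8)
The plan is to prove \eqref{lower-bound} by \emph{backward induction} on the time index $t$, from the terminal time $T$ down to $0$, exploiting that the recursion \eqref{opt_max_kappa} defining $f$ mirrors a one-step reachability recursion for the true reachability probability under $\pi_C$. Throughout, I fix an arbitrary transition function $\Delta_\FP^{\mathcal{T}}$ consistent with the bounds \eqref{not_really_an_assumption_tpmdp} and abbreviate $g(p_i^t) := \Pr\big(p_i^t \rightarrow F_\FP^{\mathcal{T}};\pi_{C}\big)$; showing $g(p_i^t)\ge f(p_i^t)$ for every state and every such $\Delta_\FP^{\mathcal{T}}$ establishes the lemma, since the bound must in particular hold for the unknown true dynamics. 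For the base case $t=T$, every final-time state is accepting or trash: if $p_i^T\in F_\FP^{\mathcal{T}}$ then the target is already reached, so $g(p_i^T)=1=f(p_i^T)$ by \eqref{v(s)_T}; if $p_i^T\in\Psi^{\mathcal{T}}_{\mathcal{P}}$ then $f(p_i^T)=0\le g(p_i^T)$ trivially.

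For the inductive step, assume $g(p_j^{t+1})\ge f(p_j^{t+1})$ for all states at time $t+1$, and let $a^*=\pi_C(p_i^t)$. Using the Markov property and the fact that accepting and trash states are absorbing (so reaching $F_\FP^{\mathcal{T}}$ within the remaining horizon decomposes over the first transition), I would write
\begin{equation}
g(p_i^t) = \sum_{p_j^{t+1} \in N(p_i^t,a^*)} \Delta_\FP^{\mathcal{T}}(p_i^t,a^*,p_j^{t+1})\, g(p_j^{t+1}),
\end{equation}
where the sum may be restricted to $N(p_i^t,a^*)$ because any successor with $\Delta_{max}(p_i^t,a^*,p_j^{t+1})=0$ also has $\Delta_\FP^{\mathcal{T}}(p_i^t,a^*,p_j^{t+1})=0$ by \eqref{not_really_an_assumption_tpmdp}. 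Since every summand is nonnegative, applying the induction hypothesis termwise yields $g(p_i^t)\ge \sum_{j}\Delta_\FP^{\mathcal{T}}(p_i^t,a^*,p_j^{t+1})\, f(p_j^{t+1})$.

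The crux, and the step I expect to need the most care to state cleanly, is the next one: the true transition vector $\big(\Delta_\FP^{\mathcal{T}}(p_i^t,a^*,p_j^{t+1})\big)_{j}$ is itself a \emph{feasible} point of the linear program \eqref{opt_problem} at action $a^*$, since it is a genuine conditional distribution summing to one (satisfying \eqref{eq:con1}), obeys the box bounds \eqref{eq:con2} by \eqref{not_really_an_assumption_tpmdp}, and is supported on $N(p_i^t,a^*)$. As $\kappa(p_i^t,a^*)$ is the \emph{minimum} of the objective $\sum_j f(p_j^{t+1})\hat{\Delta}_j$ over all feasible vectors, the objective evaluated at the true distribution is at least $\kappa(p_i^t,a^*)$. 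Finally, by the definitions of $\pi_C$ in \eqref{policy_equation} and of $f$ in \eqref{opt_max_kappa}, $\kappa(p_i^t,a^*)=\max_{a\in A}\kappa(p_i^t,a)=f(p_i^t)$. Chaining the three inequalities gives $g(p_i^t)\ge f(p_i^t)$, closing the induction and proving the lemma. The only bookkeeping to watch is that the LP is posed over exactly the coordinates in $N(p_i^t,a^*)$ and that the absorbing structure of the automaton (self-loops on accepting and trash states) justifies the one-step reachability decomposition uniformly for states that are, or are not, already in $F_\FP^{\mathcal{T}}$.
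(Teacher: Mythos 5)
Your proposal is correct and follows essentially the same route as the paper's proof: backward induction from $t=T$, the one-step decomposition of the reachability probability under $a^*=\pi_C(p_i^t)$, termwise application of the induction hypothesis, and then the bound $\sum_j f(p_j^{t+1})\Delta_\FP^{\mathcal{T}}(p_i^t,a^*,p_j^{t+1}) \geq \kappa(p_i^t,a^*) = f(p_i^t)$. The only difference is presentational: you spell out explicitly why the true transition vector is a feasible point of the LP \eqref{opt_problem} (which the paper compresses into ``due to \eqref{opt_problem}, \eqref{opt_max_kappa}, and \eqref{policy_equation}''), a worthwhile clarification but not a different argument.
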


\begin{proof}
We will prove the lemma by induction. We first show that \eqref{lower-bound} holds for step $t=T$. Since the states in $\FP^{\mathcal{T}}$ at $t=T$ are either $trash$ or $accepting$, under any policy the probability of reaching $F_{\FP}^\mathcal{T}$ from any of those states is  1 ($accepting$) or 0 ($trash$) by definition. Hence,
\begin{equation}
\label{induction_first_step}
{\Pr\big(p_i^T \xrightarrow{} F_\FP^{\mathcal{T}};\pi_{C} \big)} = \left\{\begin{array}{ll} \mbox{1, if $p_i^T \in F^{\mathcal{T}}_{\mathcal{P}}$,} \\ \mbox{0, if $p_i^T \in \Psi^{\mathcal{T}}_{\mathcal{P}}$.}\end{array}\right.
\end{equation}
By \eqref{v(s)_T} and \eqref{induction_first_step}, it can be concluded that \eqref{lower-bound} holds for $t=T$. 

Now suppose that \eqref{lower-bound} holds for all states at $t=k+1$ for some $0\leq k\leq T-1$. Then, we will show that \eqref{lower-bound} also holds for all states at $t=k$. Let $p_i^{k}$ denote a state in $\FP^{\mathcal{T}}$ at $t=k$. Let $a^*= \pi_{C}(p_i^{k})$ be the action at $p_i^{k}$ given by the policy $\pi_{C}$. For any state $p^{k+1}_{j} \in N(p_i^{k},a^*)$, recall that $\Delta_\FP^{\mathcal{T}}(p_i^{k},a^*,p^{k+1}_j)$ denotes the probability of reaching $p^{k+1}_{j}$ from $p_i^{k}$ by taking action $a^*$. Accordingly,
\begin{equation}
\footnotesize
\label{lemma_eq2}
\begin{split}
\Pr\big(p_i^{k} \rightarrow F_\FP^{\mathcal{T}};\pi_{C} \big) 
    & = \sum_{j=1}^{n} {\Pr\big(p_j^{k+1} \rightarrow F_\FP^{\mathcal{T}};\pi_{C} \big)} \Delta_\FP^{\mathcal{T}}(p_i^{k},a^*,p_j^{k+1}) \\
    & \geq \sum_{j=1}^{n} {f(p_j^{k+1})} \Delta_\FP^{\mathcal{T}}(p_i^{k},a^*,p_j^{k+1}),  
\end{split}
\end{equation}
where $n=|N(p_i^{k},a^*)|$ and the inequality is obtained from the premise that  \eqref{lower-bound} holds for every state at $t=k+1$. Furthermore, due to \eqref{opt_problem}, \eqref{opt_max_kappa}, and \eqref{policy_equation},

\begin{equation}\label{lemma_eq3}
       \sum_{j=1}^{n} f(p_j^{k+1}) \Delta_\FP^{\mathcal{T}}(p_i^{k},a^*,p_j^{k+1})  
        \geq  \kappa(p_i^{k},a^*) = f(p_i^{k}) .
\end{equation}

Due to \eqref{lemma_eq2} and \eqref{lemma_eq3}, \eqref{lower-bound} holds for $t = k$. Hence, we conclude by induction that \eqref{lower-bound} holds for any $t$ in $\{0,1,\dots,T\}$.
\end{proof}

%\section{Satisfying Temporal Logic Constraint During Q-learning}

Next, we present a modified Q-learning algorithm that learns a policy while the desired TL constraint is satisfied with a probability greater than $Pr_{des}$ in every episode. 
In a nutshell, the algorithm first prunes actions that may lead to states with a lower bound $f(\cdot)$ smaller than $Pr_{des}$ and only allows the remaining actions during learning. In cases where all the actions are pruned this way, an action is selected to maximize $\kappa(\cdot)$. In light of Lemma~\ref{lemma}, we propose the so-called one-shot pruning algorithm (Alg.~1a) to construct a pruned time-total product MDP. In Alg.~1b, we present a modified Q-learning algorithm to learn a policy over the pruned time-total product MDP.

 \begin{algorithm}[htb!]
 \label{alg:alg1}
 \begin{center}
\resizebox{\columnwidth}{!}{
\begin{tabular}{ll}
%\hline 
\bf{Alg. 1a:} \textbf{One-shot Pruning Algorithm for Time-total Product MDP} \\
\hline
 \emph{Input:} $ \mathcal{P}^{\mathcal{T}} = (S^{\mathcal{T}}_{\mathcal{P}}, P^{\mathcal{T}}_{init}, A, \Delta_\FP^{\mathcal{T}}, R^{\mathcal{T}}_{\mathcal{P}}, F^{\mathcal{T}}_{\mathcal{P}}, \Psi^{\mathcal{T}}_{\mathcal{P}})$ (time-total product MDP)\\
 \emph{Input:} $Pr_{des}$ (desired satisfaction probability)\\
\emph{Output:} $\mathcal{P}^{\mathcal{T}}$ (pruned time-product MDP), $\pi_C$\\
\hline 
\mbox{\small $\;1:\;$}\textbf{Initialization:}  $T_s= T,\,\,T_e=0,\,\,$ $Act(p^t)=A$ for all $p^t \in S^\mathcal{T}_\mathcal{P}$ \\
\hspace{2.5cm}  $f(p^{T_s}) \leftarrow \eqref{v(s)_T}$ for all $p^T \in S^\mathcal{T}_\mathcal{P}$ \\
\mbox{\small $\;2:\;$}\hspace{0.1cm}\textbf{for} each $t \in\{T_s-1,T_s-2,\dots,T_e\}$ \\
\mbox{\small $\;3:\;$}\hspace{0.5cm}\textbf{for} each $p^t$ s.t. $p^t \notin F^{\mathcal{T}}_{\mathcal{P}}$ and $p^t \notin \Psi^{\mathcal{T}}_{\mathcal{P}}$\\
\mbox{\small $\;4:\;$}\hspace{0.9cm}\textbf{for} each action $a \in Act(p^t)$\\
\mbox{\small $\;5:\;$}\hspace{1.3cm}$ N(p^{t},a) \leftarrow \eqref{N_definition}$; \\
\mbox{\small $\;6:\;$}\hspace{1.3cm}\textbf{if } $\exists p^{t+1}\in N(p^{t},a)$ s.t. $f(p^{t+1}) < Pr_{des}$ \\
\mbox{\small $\;7:\;$}\hspace{1.55cm}$Act(p^t)=Act(p^t)\setminus \{a\}$ ;\\
\mbox{\small $\;8:\;$}\hspace{1.3cm}\textbf{end if}\\
\mbox{\small $\;9:\;$}\hspace{1.3cm}$ \kappa(p^{t},a) \leftarrow \eqref{opt_problem}$; \\
\mbox{\small $\;10:\;$}\hspace{0.75cm}\textbf{end for}\\
\mbox{\small $\;11:\;$}\hspace{0.75cm}$f(p^t) \leftarrow $\eqref{opt_max_kappa};\\
\mbox{\small $\;12:\;$}\hspace{0.75cm}$ \pi_{C}(p^t)= \argmax \limits_{a}\kappa(p^{t},a) $;\\
\mbox{\small $\;13:\;$}\hspace{0.4cm}\textbf{end for}\\
\mbox{\small $\;14:\;$}\hspace{0.1cm}\textbf{end for}\\
\mbox{\small $\;15:\;$} $\mathcal{P}^{\mathcal{T}} =(S^{\mathcal{T}}_{\mathcal{P}}, P^{\mathcal{T}}_{init}, Act:S_\FP^\mathcal{T} \rightarrow 2^A, \Delta_\FP^{\mathcal{T}}, R^{\mathcal{T}}_{\mathcal{P}}, F^{\mathcal{T}}_{\mathcal{P}}, \Psi^{\mathcal{T}}_{\mathcal{P}})$
\end{tabular}}
\end{center}
\end{algorithm}

 \begin{algorithm}[htb!]
 \label{alg:alg1}
 \begin{center}
\resizebox{\columnwidth}{!}{
\begin{tabular}{ll}
%\hline 
\bf{Alg. 1b:} \textbf{Q-Learning with Constraint Satisfaction Guarantee} \\
\hline
 \emph{Input:} $\mathcal{P}^{\mathcal{T}} =(S^{\mathcal{T}}_{\mathcal{P}}, P^{\mathcal{T}}_{init}, Act:S_\FP^\mathcal{T} \rightarrow 2^A, \Delta_\FP^{\mathcal{T}}, R^{\mathcal{T}}_{\mathcal{P}}, F^{\mathcal{T}}_{\mathcal{P}}, \Psi^{\mathcal{T}}_{\mathcal{P}})$ \\
\emph{Input:} $p_{init}=(s_{init},q_{init},0) \in P_{init}^{\mathcal{T}}$, $\pi_C$ \\
\emph{Output:}  $\pi:S^\mathcal{T}_\mathcal{P} \rightarrow A$ \\
\hline 
\mbox{\small $\;1:\;$}\textbf{Initialization:} Initial $Q-$table, $p \leftarrow p_{init}$;  \\
\mbox{\small $\;2:\;$}$\text{flag}^{\pi_C} \leftarrow $ False; \\
\mbox{\small $\;3:\;$}\hspace{0.05cm}\textbf{for}\hspace{0.1cm}  $j=0:N_{episode}$ \\
\mbox{\small $\;4:\;$}\hspace{0.5cm}\textbf{for} $t=0:T$ \\ 
\mbox{\small $\;5:\;$}\hspace{0.9cm}\textbf{if} $\text{flag}^{\pi_C}$ or $Act(p)=\emptyset$\\
\mbox{\small $\;6:\;$}\hspace{1.25cm} Select an action \emph{a} from $\pi_C(p)$;\\
\mbox{\small $\;7:\;$}\hspace{1.25cm} $\text{flag}^{\pi_C} \leftarrow $ True;\\
\mbox{\small $\;8:\;$}\hspace{0.9cm}\textbf{else}\\
\mbox{\small $\;9:\;$}\hspace{1.4cm}Select an action \emph{a} from $Act(p)$ via $\epsilon-$greedy ($\pi$ policy);\\
\mbox{\small $\;11:\;$}\hspace{0.75cm}Take action \emph{a}, observe the next state $p^\prime=(s^\prime,q^\prime, t+1)$ and reward \emph{r};\\
\mbox{\small $\;12:\;$}\hspace{0.75cm}$Q(p,a) = (1-\alpha_{ep}) Q(p,a) + \alpha_{ep} \big[ r + \gamma \max\limits_{a^\prime}  Q(p^\prime,a^{\prime}) \big]$; \\
\mbox{\small $\;13:\;$}\hspace{0.75cm}$\pi(p) = \arg\max\limits_a Q(p,a))$; \\
\mbox{\small $\;14:\;$}\hspace{0.75cm}$p = p^\prime$; \\
\mbox{\small $\;15:\;$}\hspace{0.75cm}\textbf{if} $p$ is $Accepting$ or $Trash$\\
\mbox{\small $\;16:\;$}\hspace{1.1cm} $\text{flag}^{\pi_C} \leftarrow $ False;\\
\mbox{\small $\;17:\;$}\hspace{0.35cm}\textbf{end for}\\
\mbox{\small $\;18:\;$}\hspace{0.35cm}$ p = (s^\prime,q_{init},0)$; \\
\mbox{\small $\;19:\;$}\textbf{end for} 
\end{tabular}}
\end{center}
\end{algorithm}

Algorithm~1a is executed offline and its inputs are the desired probability of satisfaction $Pr_{des}$ and the time-total product MDP constructed from the MDP and the TL constraint $\phi$. The algorithm starts with initializing the time horizon $T$ as the time-bound and feasible action set $Act(\cdot)$ for each time-total product MDP state as $A$, the action set of the MDP. 
For each time-total product MDP state $p^t$ that is neither a trash state nor an accepting state, the pruning is done by inspecting every action. If an action $a$ can cause the system to move to a state $p^{t+1}$ where $f(p^{t+1})$ is less than $Pr_{des}$, then the action is pruned from $Act(p^t)$ (lines 6-8). Then, $f(p^t)$ is computed based on the values of $f(p^{t+1})$, and the action $\pi_{C}(p^t)$ is obtained. The output of the algorithm is a time-total product MDP with the pruned action sets $Act(\cdot)$ and the policy $\pi_C$.

A modified $Q$-learning algorithm for learning a policy on the pruned time-total product MDP is presented in Alg.~1b. For each time-total product MDP state $p$, if the feasible action set $Act(p)$ is not empty, then an action is selected from $Act(p)$ (line 9). If $Act(p)$ is empty, the RL agent will follow the policy $\pi_{C}$ until it reaches accepting states or trash states (lines 15-16). The general steps for the $Q$-updates are achieved in lines 11-13. 

The following theorem shows that the probability of reaching an accepting state (i.e., satisfying the desired TL constraint)  while executing Alg.~1 is at least $Pr_{des}$ in every episode of learning. 
%Since the probability of satisfying $\phi$ is equivalent to the probability of reaching the accepting states in the time-total product MDP, the following discussion will focus on providing a theoretical guarantee of the probability of reaching the accepting states.

\vspace{-2mm}
\begin{theorem}
\label{theorem1}
Given a time-total product MDP and lower/upper bounds of transition probabilities $\Delta_{min}$ and $\Delta_{max}$ as in \eqref{not_really_an_assumption}, %Let $T$ be the time bound of the time-bounded temporal logic constraint $\phi$ that should be satisfied with a probability of at least $Pr_{des}$ in each episode. 
let $f(\cdot)$ denote the lower bound function computed in Alg. 1a. If the set of initial states of the time-total product MDP, i.e., $P_{init}^\mathcal{T}$, satisfies
\begin{equation}
    \label{eq:th1}
    f(p_i^0) \geq Pr_{des}, \quad \forall p_i^0 \in P_{init}^\mathcal{T},
\end{equation}
then the probability of reaching the accepting states is at least $Pr_{des}$ in every episode while Alg. 1b is executed.
\end{theorem}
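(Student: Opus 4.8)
The plan is to prove the statement by backward induction on the time index $t$, establishing a safety invariant that the policy executed by Alg.~1b maintains throughout every episode. That policy operates in two modes: while the pruned action set $Act(p^t)$ is nonempty and the commit flag is False, the agent explores by selecting \emph{some} action $a \in Act(p^t)$ via $\epsilon$-greedy; once it reaches a state with $Act(p^t)=\emptyset$ (or the flag has been set), it follows the Go-to-$F_\FP^{\mathcal{T}}$ policy $\pi_C$ until an accepting or trash state is hit. The single claim I would prove is: \emph{for every reachable state $p^t$ with $f(p^t) \geq Pr_{des}$, the probability of reaching $F_\FP^{\mathcal{T}}$ in the remaining time under the executed policy is at least $Pr_{des}$}. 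The theorem follows at once, since by hypothesis \eqref{eq:th1} every $p_i^0 \in P_{init}^{\mathcal{T}}$ satisfies $f(p_i^0)\geq Pr_{des}$ and the flag is initialized to False.

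I would prove the claim by backward induction on $t$, treating the two modes as sub-cases at each level. The commit-mode sub-case is immediate from Lemma~\ref{lemma}: following $\pi_C$ from $p^t$ yields success probability at least $f(p^t)\geq Pr_{des}$. In the base case $t=T$ every state is accepting or trash, and a state with $f \geq Pr_{des}>0$ must be accepting, with success probability $1$. For the explore-mode sub-case at $p^t$ (so $Act(p^t)\neq\emptyset$), the agent takes some unpruned action $a$; by the pruning rule (lines 6--8 of Alg.~1a) every successor $p^{t+1}\in N(p^t,a)$ has $f(p^{t+1})\geq Pr_{des}$, so in particular no successor is a trash state. By the induction hypothesis at $t+1$, each such successor has remaining success probability at least $Pr_{des}$. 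Since $\Delta_M\leq\Delta_{\max}$ forces the true transition distribution to be supported on $N(p^t,a)$ and to sum to one, I obtain
\begin{equation}
\begin{aligned}
\Pr\big(p^t \to F_\FP^{\mathcal{T}}\big) &= \sum_{p^{t+1}\in N(p^t,a)} \Delta_\FP^{\mathcal{T}}(p^t,a,p^{t+1})\,\Pr\big(p^{t+1}\to F_\FP^{\mathcal{T}}\big) \\
&\geq \sum_{p^{t+1}\in N(p^t,a)} \Delta_\FP^{\mathcal{T}}(p^t,a,p^{t+1})\,Pr_{des} = Pr_{des},
\end{aligned}
\end{equation}
which closes the induction.

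The main obstacle I anticipate is not the arithmetic but reconciling the two modes of Alg.~1b inside one induction: during exploration the chosen action is arbitrary within $Act(p^t)$ and need not coincide with the $\kappa$-maximizing action defining $\pi_C$, so Lemma~\ref{lemma} cannot be applied directly at an exploring step. The argument works precisely because pruning guarantees the \emph{weaker} property that all reachable successors remain in the region $\{f \geq Pr_{des}\}$, which is exactly what is needed to propagate the uniform bound $Pr_{des}$ (rather than the tighter, action-dependent bound $f(p^t)$) through the convex combination above. I would also make explicit that $N(p^t,a)$ contains every state of positive true transition probability, so the displayed sum exhausts the full probability mass and no unaccounted transition into a trash state is possible; this is the step that rules out leakage of probability to $\Psi_\FP^{\mathcal{T}}$ and thereby secures the bound in every episode.
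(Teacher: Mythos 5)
Your proof is correct, but it is organized differently from the paper's. The paper proves Theorem~\ref{theorem1} by partitioning each episode into three disjoint cases according to when the agent commits to $\pi_C$: (a) all $T$ actions are taken from the pruned sets $Act(\cdot)$, (b) the agent explores and then switches to $\pi_C$ at some instant $t'$, and (c) the agent follows $\pi_C$ from $t=0$; it then bounds the success probability in each case and concludes by total probability. Case (a) is settled by observing that any unpruned action at $T-1$ has $N(p^{T-1},a)\subseteq F_\FP^{\mathcal{T}}$ (so pure exploration succeeds with probability $1$, not merely $Pr_{des}$), while cases (b) and (c) invoke Lemma~\ref{lemma} at the switch point or at the initial state, using the pruning rule to guarantee $f\geq Pr_{des}$ where $\pi_C$ takes over. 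You instead run a single backward induction with the invariant that from any reachable state with $f(p^t)\geq Pr_{des}$ the executed two-mode policy succeeds with probability at least $Pr_{des}$, propagating the uniform bound through the convex combination over $N(p^t,a)$ at explore steps and invoking Lemma~\ref{lemma} only in the commit sub-case. Your route buys a cleaner treatment of the mode interleaving: it avoids conditioning on the random switch time (which the paper handles somewhat informally in Case (b)) and makes explicit the fact that the true transition distribution is supported on $N(p^t,a)$, which the paper uses implicitly. The paper's route stays closer to the algorithm's operational behavior and yields the sharper side observation about sure success under pure exploration. Two small points you should tighten: the invariant is really indexed by the pair (state, mode) since the executed policy depends on the flag, and at intermediate times an explore-step successor may itself lie in $F_\FP^{\mathcal{T}}$, in which case it contributes success probability $1$ rather than falling under either mode sub-case; both are routine to patch and do not affect correctness.
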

% %\vspace{-2mm}

\begin{proof}
In each episode of learning, there are $T$ actions to be taken. Each action is either from the set $Act$ (line 9 Alg. 1b) or from policy $\pi_C$ (line 6 Alg. 1b). Line 5 in Alg. 1b implies that the agent might switch from $Act$ to $\pi_C$ at some point before reaching accepting states or trash states. Once the agent adopts $\pi_C$, it will follow it until reaching either the accepting states or trash states (Lines 7, 15-16). Therefore, there are three different cases of action sequences that could be observed in each episode in Alg. 1b:
\begin{itemize}
    \item \textbf{Case(a)}: Sequences such that all actions from time 0 to $T-1$ are taken from set $Act$.
    \item \textbf{Case(b)}: Sequences that start with $Act$ actions and switch to $\pi_C$ at some point until reaching a $trash$ or $accepting$ states. 
    % such that switching from $Act$ to $\pi_C$ actions happened at some time $0<t'< t$, and all the following actions from $t'$ to $t$ are taken according to the policy $\pi_C$.
    \item \textbf{Case(c)}: Sequences that start with $\pi_C$ until reaching a $trash$ or $accepting$ state. 
\end{itemize}
Since these cases are disjoint,
%and for each the probability of constraint satisfaction is at least $Pr_{des}$, 
we prove the theorem by showing that the probability of reaching an accepting state at time $t$ is at least $Pr_{des}$ in each of these cases.

% These action sequences can be categorized into three different sequence types: (a) sequences such that the last action taken at time $T - 1$ is selected from the set $Act$; (b) sequences such that the action at some $t \in \{0,1,...,{T-2}\}$ is selected from the set $Act$, and all the following actions from $t+1$ to the end of episode are taken according to the policy $\pi_C$; and (c), sequences such that all actions are taken according to the policy $\pi_C$. These cases are shown in Fig. \ref{fig:cases}.
\noindent\textbf{Case(a)} Remember that every state at $t=T$ is either $trash$ ($f(p_i^T)=0$) or $accepting$ ($f(p_i^T)=1$). Hence, for any $Pr_{des} \in (0,1]$, the only actions left in $Act(p_i^{T-1})$ after the pruning in lines 6-8 of Alg. 1a are those that surely lead to an accepting state, i.e., $N(p_i^{T-1},a)\subseteq F^{\mathcal{T}}_{\mathcal{P}}$ for every $a\in Act(p_i^{T-1})$. Thus,  the system surely reaches an accepting state in this case.

% For any state $p_i^{T-1}$, let  $Act(p_i^{T-1})$ denote the set of available actions from which the last action at $T - 1$ is chosen. For all $a \in Act(p_i^{T-1})$, the probability of reaching $accepting$ states from $p_i^{T-1}$ by taking action $a$, i.e. $\Pr\big(p_i^{T-1} \rightarrow F_\FP^{\mathcal{T}}; a \big)$ is
% \begin{equation*}
% \begin{split}
% \Pr\big(p_i^{T-1} \rightarrow F_\FP^{\mathcal{T}}; a \big)  
%    & =  \sum_{j=1}^{n} f(p_j^{T}) \Delta_\FP^{\mathcal{T}}(p_i^{T-1},a,p_j^{T})\\
%    & \geq \sum_{j=1}^{n} f(p_j^{T}) \Delta^{*}_{j} = \kappa(p_i^{T-1},a).
% \end{split}
% \end{equation*}
% Note that $n=|N(p_i^{T-1},a)|$ and values $\Delta^{*}_{1},\hdots,\Delta^{*}_{n}$ are solution to minimization problem \eqref{opt_problem} for pair $(p_i^{T-1},a)$. From lines 9-10 in Alg. 1 we have
% \begin{equation*}
%  \kappa(p_i^{T-1},a) \geq Pr_{des}, \quad \forall a \in Act(p_i^{T-1}). 
% \end{equation*}
% Hence $\Pr\big(p_i^{T-1} \rightarrow F_\FP^{\mathcal{T}}; a \big) \geq Pr_{des}.$

\noindent \textbf{Case(b)} For any such sequence of actions, let $t^{\prime} \in\{0,1,2, \ldots, T-2\}$ be the instant such that $a \in Act(p_i^{t'})$ and all the following actions are taken according to the policy $\pi_{C}$ until reaching a $trash$ or $accepting$ state. Due to the pruning in lines 6-8 of Alg. 1a, taking $a \in Act(p_i^{t'})$ surely leads to a state $p_i^{t'+1}$ such that ${\Pr\big(p_i^{t'+1} \xrightarrow{} F_\FP^{\mathcal{T}}; \pi_{C}\big)} \geq f(p_i^{t'+1}) \geq Pr_{des}$. Given that the system switches to $\pi_C$ starting at $p_i^{t'+1}$, we conclude that the probability of reaching an accepting state in this case is also at least $Pr_{des}$.

\noindent \textbf{Case(c)} In this case, each action is taken according to the policy $\pi_{C}$ for all time steps from 0 to $T-1$. By Lemma~\ref{lemma} and \eqref{eq:th1},
\begin{equation*}
    \Pr\big(p_i^{0} \rightarrow F_\FP^{\mathcal{T}};\pi_{C} \big) \geq f(p_i^0) \geq Pr_{des}, \quad \forall p_i^0 \in P_{init}^\mathcal{T},
\end{equation*} 
implying that the probability of reaching $accepting$ state (i.e. satisfying the constraint) is at least $Pr_{des}$. 
\end{proof}

\subsection{Multi-shot Pruning Algorithm}
While the proposed algorithm can guarantee probabilistic constraint satisfaction in every episode during RL, the learned policy might be overly-conservative since the agent will follow $\pi_C$ until it achieves the constraint satisfaction once it switches to $\pi_C$ at any point. This behavior eliminates the possibility of switching between making progress towards constraint satisfaction and collecting rewards as needed. To address this issue, we also propose a multi-shot pruning algorithm that allows the agent to switch between $\pi_C$ and exploration more effectively. The multi-shot pruning algorithm essentially decomposes the time-total product MDP into several subgraphs and applies the one-shot algorithm in each subgraph to ensure constraint satisfaction.

%o reduce conservatism and guarantee constraint satisfaction with a probability greater than the desired threshold. Essentially, 

In the example shown in Fig.~\ref{fig:multi-layer}, a time-total product MDP $\mathcal{P}^{\mathcal{T}}$ is decomposed into three subgraphs, $G_1$(green), $G_2$(yellow) and $G_3$(red), given two time stamps $t_1$ and $t_2$. Let $Pr_1$, $Pr_2$, and $Pr_3$ be the desired probability thresholds for these subgraphs such that $\Pi_{i=1}^3 Pr_i = Pr_{des}$. Starting from the last sub-graph, the backward propagation is executed until the first layer of that sub-graph (instead of the initial layer of the time-total product MDP as depicted in the previous section). After calculating the worst-case maximum probability values, the states at the last layer of each sub-graph can be classified as accepting and trash. Overall, the backward propagation procedure that is described in the previous section is adopted to the sub-graphs of the time-total product MDP. The decomposition of the time-total product MDP into subgraphs is not unique. As we will show later in Theorem \ref{theorem2}, the probability of satisfaction can be ensured as long as the provided set of time and desired probabilities comply with the required conditions. 

\begin{figure}[ht]
 \begin{center}
\includegraphics[width=\columnwidth]{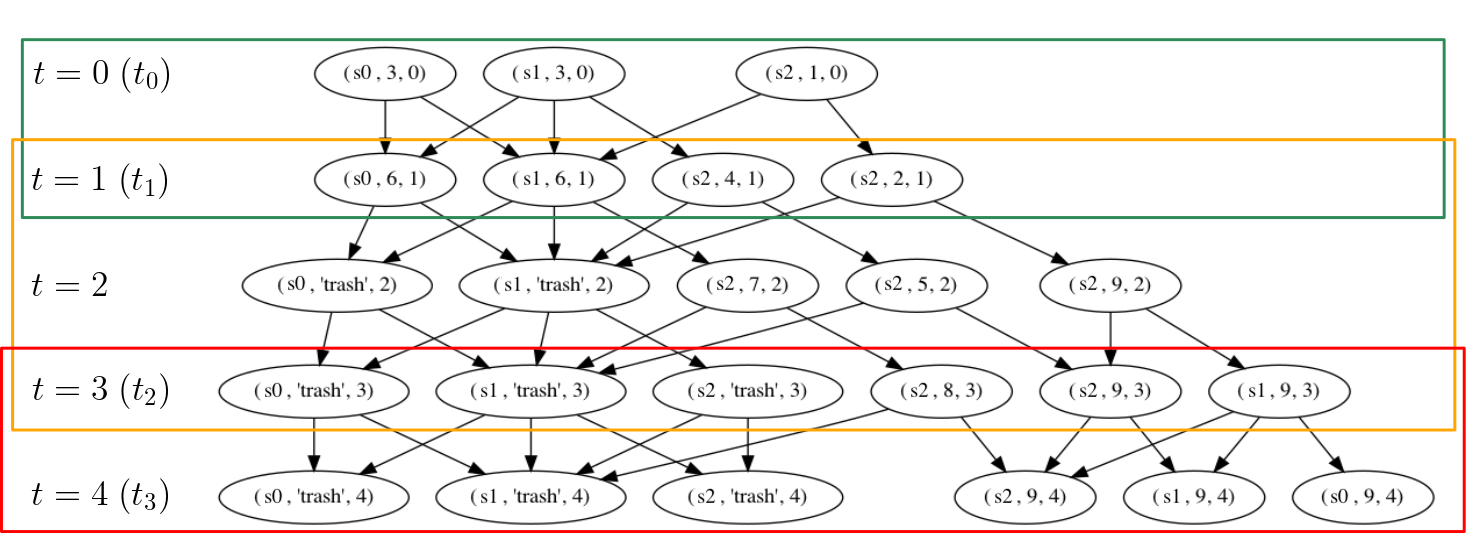}
 \caption{Partitioned time-total product MDP}
 \label{fig:multi-layer}
 %% %\vspace{-0.2cm}
 \end{center}
\end{figure}
\vspace{-2mm}
We generalize Algs.~1a and 1b as a multi-shot pruning algorithm in Alg.~2a and the corresponding modified Q-learning algorithm in Alg.~2b. The inputs of the multi-shot pruning algorithm are a time-total product MDP, a set of time stamps $\{t_0=0, t_1, t_2, ..., t_{N-1}, t_N = T\}$ for decomposing the time-total product MDP, and a set of desired probabilities $\{Pr_1, Pr_2, ..., Pr_N\}$ such that $\prod\limits_{i=1}^{N}Pr_i = Pr_{des}$. In line 1, the time-total product MDP is divided into N subgraphs $\{G_i\}_{i=1,2,...N}$ (see Fig.~\ref{fig:multi-layer}), where each subgraph $G_i$ spans from $t_{i-1}$ to $t_i$. The one-shot pruning algorithm with a probability threshold $Pr_N$ is applied on the last subgraph $G_N$ to obtain the lower bound function $f^N$, pruned action set $Act^N$, and $\pi_C^N$ policy for $G_N$ (lines 2-3). We denote the sets of accepting states and trash states of time-total product MDP as $A_N$ and $T_N$ respectively (line 4). For each subgraph $G_i (i=N-1, N-2, ..., 1)$, we define the set of accepting states $A_i$ and set of trash states $T_i$ based on the function $f^{i+1}$ from the next subgraph $G_{i+1}$ (lines 6-7).  We then repeatedly apply the one-shot pruning algorithm with a probability threshold $Pr_i$ to obtain the function $f^i$, pruned action set $Act^i$, and $\pi_C^i$ policy for $G_i$ (lines 8-10). For each subgraph $G_i$, the pruning is carried out from $t_{i-1}$ to $t_{i}-1$, and thus the pruned action set $Act^i$ and policy $\pi_C^i$ cover from $t_{i-1}$ to $t_{i}-1$. Therefore, $Act^i$ and $\pi_C^i$ for the subgraphs are disjoint and can combine into a single $Act(\cdot)$ and $\pi_C$ (lines 11-12). Note that the one-shot algorithm is essentially a special case of the multi-shot algorithm with the set of time $\{t_0=0, t_1 = T\}$ and set of probabilities $\{Pr_1 = Pr_{des}\}$.

In Alg.~2b, once the agent adopts the $\pi_C$ policy (line 7), it can switch to $Act$ if it reaches the accepting states or trash states of any subgraph (lines 15-16). Essentially, Alg.~2b applies Alg.~1b on each subgraph to learn a policy with a probabilistic constraint satisfaction guarantee. 

\begin{algorithm}[htb!]
 \label{alg:alg1}
 \begin{center}
\resizebox{\columnwidth}{!}{
\begin{tabular}{ll}
%\hline 
\bf{Alg. 2a:} \textbf{Multi-shot Pruning Algorithm for Time-total Product MDP} \\
\hline
 \emph{Input:} $ \mathcal{P}^{\mathcal{T}} = (S^{\mathcal{T}}_{\mathcal{P}}, P^{\mathcal{T}}_{init}, A, \Delta_\FP^{\mathcal{T}}, R^{\mathcal{T}}_{\mathcal{P}}, F^{\mathcal{T}}_{\mathcal{P}}, \Psi^{\mathcal{T}}_{\mathcal{P}})$ (time-total product MDP)\\
 \emph{Input:} $\{t_0=0, t_1, t_2, ..., t_{N-1}, t_N = T\}$, $\{Pr_1, Pr_2, ..., Pr_N\}$\\
\emph{Output:} $\mathcal{P}^{\mathcal{T}}$ (pruned time-total product MDP),\,$\pi_C$\\
\hline 

\mbox{\small $\;1:\;$}\hspace{0.1cm}$G_{i} \leftarrow \mathcal{P}^{\mathcal{T}}_{t_{i-1}:\,t_{i}}$, $i = N, N-1, ..., 1 $\\
\mbox{\small $\;2:\;$}\hspace{0.1cm}\textbf{Initialization:}  $T_s \leftarrow t_N,\,\, T_e \leftarrow t_{N-1}$, \,\,$Act(p^t)=A$ for all $p^t \in S^\mathcal{T}_\mathcal{P}$ \\
\hspace{2.5cm}  $f^N(p^{T_s}) \leftarrow \eqref{v(s)_T}$ for all $p^T \in S^\mathcal{T}_\mathcal{P}$ \\
\mbox{\small $\;3:\;$}\hspace{0.1cm}$f^N(\cdot), \,\pi_C^N, \,Act^N(\cdot) \leftarrow $ run lines 2-18 in Alg.~1a on $G_N$ with $Pr_N$\\
\mbox{\small $\;4:\;$}\hspace{0.1cm}$A_N \leftarrow F^{\mathcal{T}}_{\mathcal{P}}; T_N \leftarrow \Psi^{\mathcal{T}}_{\mathcal{P}} $\\
\mbox{\small $\;5:\;$}\hspace{0.1cm}\textbf{for} $i = N-1, N-2, ..., 1 $\\
\mbox{\small $\;6:\;$}\hspace{0.5cm} $A_i \leftarrow \{p^{t_i}|f^{i+1}(p^{t_i})\geq Pr_{i+1}\}$(set of accepting states for $G_i$)\\
\mbox{\small $\;7:\;$}\hspace{0.5cm} $T_i \leftarrow \{p^{t_i}|f^{i+1}(p^{t_i})< Pr_{i+1}\}$(set of trash states for $G_i$)\\
\mbox{\small $\;8:\;$}\hspace{0.5cm} $T_s \leftarrow t_i, T_e \leftarrow t_{i-1}$\\
\mbox{\small $\;9:\;$}\hspace{0.5cm} for $\forall p^{T_s}$ \textbf{if} $p^{t_i} \in A_i$ \textbf{then} $f^i(p^{t_i}) \leftarrow 1$; \textbf{else} $f^i(p^{t_i}) \leftarrow 0$  \\
\mbox{\small $\;10:\;$}\hspace{0.36cm} $G'_{i}, \,f^i(\cdot), \,\pi_C^i, \,Act^i(\cdot) \leftarrow $ run lines 2-18 in Alg.~1a on $G_i$ with $Pr_i$\\
\mbox{\small $\;11:\;$}$\pi_C \leftarrow$ combine $\pi_C^1, \pi_C^2, ..., \pi_C^N$\\
\mbox{\small $\;12:\;$}$Act(\cdot) \leftarrow$ combine $ Act^1, Act^2, ..., Act^N$\\
\mbox{\small $\;13:\;$}$\mathcal{P}^{\mathcal{T}} =(S^{\mathcal{T}}_{\mathcal{P}}, P^{\mathcal{T}}_{init}, Act:S_\FP^\mathcal{T} \rightarrow 2^A, \Delta_\FP^{\mathcal{T}}, R^{\mathcal{T}}_{\mathcal{P}}, F^{\mathcal{T}}_{\mathcal{P}}, \Psi^{\mathcal{T}}_{\mathcal{P}})$
\end{tabular}}
\end{center}
\end{algorithm}

\begin{algorithm}[htb!]
 \label{alg:alg1}
 \begin{center}
\resizebox{\columnwidth}{!}{
\begin{tabular}{ll}
%\hline 
\bf{Alg. 2b:} \textbf{Q-Learning with Constraint Satisfaction Guarantee} \\
\hline
 \emph{Input:} $\mathcal{P}^{\mathcal{T}} =(S^{\mathcal{T}}_{\mathcal{P}}, P^{\mathcal{T}}_{init}, Act:S_\FP^\mathcal{T} \rightarrow 2^A, \Delta_\FP^{\mathcal{T}}, R^{\mathcal{T}}_{\mathcal{P}}, F^{\mathcal{T}}_{\mathcal{P}}, \Psi^{\mathcal{T}}_{\mathcal{P}})$ \\
  \emph{Input:} $p_{init}=(s_{init},q_{init},0) \in P_{init}^{\mathcal{T}}$, $\pi_C$, $A_{i(i=1,2,...N)}$, $T_{i(i=1,2,...N)}$ \\
\emph{Output:}  $\pi:S^\mathcal{T}_\mathcal{P} \rightarrow A$ \\
\hline 
\mbox{\small $\;1:\;$}\textbf{Initialization:} Initial $Q-$table, $p \leftarrow p_{init}$;  \\
\mbox{\small $\;2:\;$}$\text{flag}^{\pi_C} \leftarrow $ False; \\
\mbox{\small $\;3:\;$}\hspace{0.05cm}\textbf{for}\hspace{0.1cm}  $j=0:N_{episode}$ \\
\mbox{\small $\;4:\;$}\hspace{0.5cm}\textbf{for} $t=0:T$ \\ 
\mbox{\small $\;5:\;$}\hspace{0.9cm}\textbf{if} $\text{flag}^{\pi_C}$ or $Act(p)=\emptyset$\\
\mbox{\small $\;6:\;$}\hspace{1.25cm} $a= \pi_C(p)$;\\
\mbox{\small $\;7:\;$}\hspace{1.25cm} $\text{flag}^{\pi_C} \leftarrow $ True;\\
\mbox{\small $\;8:\;$}\hspace{0.9cm}\textbf{else}\\
\mbox{\small $\;9:\;$}\hspace{1.4cm}Select an action \emph{a} from $Act(p)$ via $\epsilon-$greedy ($\pi$ policy);\\
\mbox{\small $\;11:\;$}\hspace{0.75cm}Take action \emph{a}, observe the next state $p^\prime=(s^\prime,q^\prime, t+1)$ and reward \emph{r};\\
\mbox{\small $\;12:\;$}\hspace{0.75cm}$Q(p,a) = (1-\alpha_{ep}) Q(p,a) + \alpha_{ep} \big[ r + \gamma \max\limits_{a^\prime}  Q(p^\prime,a^{\prime}) \big]$; \\
\mbox{\small $\;13:\;$}\hspace{0.75cm}$\pi(p) = \arg\max\limits_a Q(p,a))$; \\
\mbox{\small $\;14:\;$}\hspace{0.75cm}$p = p^\prime$; \\
\mbox{\small $\;15:\;$}\hspace{0.75cm}\textbf{if} $p$ in $A_i$ or $T_i$\\
\mbox{\small $\;16:\;$}\hspace{1.1cm} $\text{flag}^{\pi_C} \leftarrow $ False;\\
\mbox{\small $\;17:\;$}\hspace{0.35cm}\textbf{end for}\\
\mbox{\small $\;18:\;$}\hspace{0.35cm}$ p = (s^\prime,q_{init},0);$ \\
\mbox{\small $\;19:\;$}\textbf{end for} 
\end{tabular} }
\end{center}
\end{algorithm}
\vspace{-2mm}

\begin{theorem}
\label{theorem2}
Given a time-total product MDP $\mathcal{P}^{\mathcal{T}}$, a set of time $\{t_0=0, t_1, t_2, ..., t_{N-1}, t_N = T\}$, and a set of desired probabilities $\{Pr_1, Pr_2, ..., Pr_N\}$ such that $\prod\limits_{i=1}^{N}Pr_i = Pr_{des}$, if the following conditions hold in Alg.~2a: 
\begin{itemize}
   \item any initial state of the time-total product MDP $p^0_i \in P^{\mathcal{T}}_{init}$ satisfies $f^1(p^0_i) \geq Pr_{1}$,
\item  the set of accepting states $A_i$ defined in Alg.~2a (line 5) is nonempty for every $i\in \{1, \hdots, N-1\} $,
\end{itemize}
then in Alg.~2b, the probability of reaching the accepting states of $\mathcal{P}^{\mathcal{T}}$ in every episode is at least $Pr_{des}$.
\end{theorem}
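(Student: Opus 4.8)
The plan is to reduce the multi-shot claim to $N$ separate invocations of Theorem~\ref{theorem1}, one on each subgraph $G_i$, and then to stitch the resulting per-subgraph guarantees together by a backward induction that exploits the Markov structure at the subgraph boundaries. The crucial structural observation is that the interface sets are defined exactly so as to match the hypothesis of Theorem~\ref{theorem1} for the neighbouring subgraph: by the definition of $A_i$ in Alg.~2a (lines~6--7), a state satisfies $p^{t_i}\in A_i$ if and only if $f^{i+1}(p^{t_i})\geq Pr_{i+1}$, i.e. if and only if it meets the initial-state condition of Theorem~\ref{theorem1} on $G_{i+1}$ with threshold $Pr_{i+1}$. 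Furthermore, since Alg.~2b resets $\text{flag}^{\pi_C}$ whenever the agent reaches an accepting or trash state of a subgraph (lines~15--16), the restriction of any Alg.~2b execution to a single $G_i$ coincides with an execution of Alg.~1b on $G_i$ with accepting set $A_i$, trash set $T_i$, and threshold $Pr_i$. Hence Theorem~\ref{theorem1} applies verbatim to each $G_i$.

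First I would record the per-subgraph guarantee in its per-initial-state form: for every $i$ and every state $p$ in the first layer (time $t_{i-1}$) of $G_i$ with $f^i(p)\geq Pr_i$, the probability that Alg.~2b, evolving inside $G_i$, reaches $A_i$ at time $t_i$ is at least $Pr_i$. This is precisely what the proof of Theorem~\ref{theorem1} establishes: its case analysis together with Lemma~\ref{lemma} bounds the success probability from a single initial state meeting the threshold, and the universal quantifier in the hypothesis merely lets the conclusion hold from every admissible start. The second hypothesis of the theorem, that $A_i\neq\emptyset$ for $i\leq N-1$, is what makes each of these targets a genuine, reachable accepting set, so that the invocation of Theorem~\ref{theorem1} on $G_i$ is non-vacuous.

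Next I would chain these bounds by induction on the subgraph index, running backward from $i=N$ to $i=1$. Let $X_i$ denote the (random) product-MDP state occupied at time $t_i$ during an episode of Alg.~2b, and let $g_i(p)$ be the probability that an episode started at $p$ at time $t_{i-1}$ eventually reaches the true accepting set $F^{\mathcal{T}}_{\mathcal{P}}$ at time $T$. The inductive claim is that $g_i(p)\geq\prod_{j=i}^{N}Pr_j$ for every $p$ with $f^i(p)\geq Pr_i$. The base case $i=N$ follows immediately from the per-subgraph guarantee applied to $G_N$, whose accepting set is $A_N=F^{\mathcal{T}}_{\mathcal{P}}$ (line~4 of Alg.~2a). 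For the inductive step I would condition on the state reached at the boundary $t_i$; because the flag resets there, the post-$t_i$ evolution depends on the past only through $X_i$, so by the Markov property
\[
g_i(p)=\sum_{p'\in A_i}\Pr\!\big(X_i=p'\mid X_{i-1}=p\big)\,g_{i+1}(p')+\sum_{p''\in T_i}\Pr\!\big(X_i=p''\mid X_{i-1}=p\big)\,g_{i+1}(p'').
\]
Discarding the nonnegative $T_i$-sum, using $g_{i+1}(p')\geq\prod_{j=i+1}^{N}Pr_j$ for $p'\in A_i$ (the induction hypothesis, valid because $A_i=\{f^{i+1}\geq Pr_{i+1}\}$), and using $\sum_{p'\in A_i}\Pr(X_i=p'\mid X_{i-1}=p)=\Pr(X_i\in A_i\mid X_{i-1}=p)\geq Pr_i$ (the per-subgraph guarantee) yields $g_i(p)\geq Pr_i\prod_{j=i+1}^{N}Pr_j=\prod_{j=i}^{N}Pr_j$.

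Finally, specializing to $i=1$ and invoking the first hypothesis, every initial state $p_i^0\in P^{\mathcal{T}}_{init}$ satisfies $f^1(p_i^0)\geq Pr_1$, so $g_1(p_i^0)\geq\prod_{j=1}^{N}Pr_j=Pr_{des}$; since $g_1(p_i^0)$ is exactly the per-episode probability of reaching $F^{\mathcal{T}}_{\mathcal{P}}$, the theorem follows. The step I expect to be the main obstacle is the rigorous justification of the Markov factorization: I must argue that resetting the flag at each boundary renders the post-$t_i$ dynamics --- including the $\epsilon$-greedy exploration over the pruned sets $Act^{i+1}$ and the fallback $\pi_C^{i+1}$ --- conditionally independent of the earlier history given $X_i$, and, just as importantly, that the per-subgraph bound of Theorem~\ref{theorem1} holds uniformly over all admissible action choices (and hence survives conditioning on the history and the instantaneous $Q$-table). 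Establishing this uniformity is what makes the product $\prod_i Pr_i$ a legitimate lower bound rather than a heuristic one.
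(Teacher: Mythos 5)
Your proposal is correct and takes essentially the same route as the paper's own proof: apply Theorem~\ref{theorem1} subgraph-by-subgraph (using that $A_i$ is, by construction, exactly the set of states meeting the initial-state condition of Theorem~\ref{theorem1} on $G_{i+1}$ with threshold $Pr_{i+1}$, and that the flag reset in Alg.~2b makes each subgraph execution an instance of Alg.~1b), then multiply the per-subgraph bounds. Your backward induction with explicit Markov conditioning, and your remark that the per-subgraph bound must hold uniformly over histories and $Q$-tables, is a more rigorous rendering of the chaining that the paper carries out informally in the forward direction.
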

\begin{proof}
Alg.~2b divides the time-total product MDP $\mathcal{P}^{\mathcal{T}}$ into $N$ subgraphs and applies Alg.~1b on each subgraph $G_i$ in a concatenated manner. By condition (a) and Theorem \ref{theorem1}, the probability of reaching the set of accepting states $A_1$ on subgraph $G_1$ from any initial state is at least $Pr_1$. For any $i=1,2,...,N-1$, since $A_i$ is nonempty (condition (b)), each state $p^{t_i} \in A_i$ satisfies $f^{i+1}(p^{t_i})\geq Pr_{i+1}$ (line 5 in Alg.~2a). By Theorem \ref{theorem1}, starting from any state in $A_i$, the probability of reaching the set of accepting states $A_{i+1}$ on subgraph $G_{i+1}$ is at least $Pr_{i+1}$. Then, starting from any initial state of $\mathcal{P}^{\mathcal{T}}$, the probability of reaching $A_N$, i.e., the set of accepting states of $\mathcal{P}^{\mathcal{T}}$ is at least $\prod\limits_{i=1}^{N}Pr_i = Pr_{des}$.
\end{proof}

\begin{table*}[ht!]
\centering
\begin{adjustbox}{width=0.93\textwidth}
\begin{tabular}{||c c c c c c c c c c c||}
 \hline
 Algorithm & ($\epsilon, Pr_{des}$) & (0.03, 0.5) & (0.03, 0.7) & (0.03, 0.9)  & (0.08, 0.5) & (0.08, 0.7) & (0.08, 0.9) & (0.13, 0.5) & (0.13, 0.7) & (0.13, 0.9) \\ [0.5ex] 
 \hline\hline
 \multirow{3}{*}{One-shot} &
 \multicolumn{1}{l}{Learning} & 
 \cellcolor{gray!20}\textcolor{blue}{91.66\%} & 
 \cellcolor{gray!45}\textcolor{blue}{91.63\%} & \cellcolor{gray!60}\textcolor{blue}{91.64\%} & \cellcolor{gray!20}\textcolor{blue}{91.60\%} & \cellcolor{gray!45}\textcolor{blue}{91.61\%} & \cellcolor{gray!60}\textcolor{blue}{99.48\%} & \cellcolor{gray!20}\textcolor{blue}{91.68\%} & \cellcolor{gray!45}\textcolor{blue}{99.32\%} & \cellcolor{gray!60}\textcolor{blue}{99.53\%} \\ &
 \multicolumn{1}{l}{Testing} & \cellcolor{gray!20}\textcolor{blue}{90.92\%} & \cellcolor{gray!45}\textcolor{blue}{91.08\%} & \cellcolor{gray!60}\textcolor{blue}{91.77\%} & \cellcolor{gray!20}\textcolor{blue}{91.43\%} & \cellcolor{gray!45}\textcolor{blue}{91.5\%} & \cellcolor{gray!60}\textcolor{blue}{99.39\%} & \cellcolor{gray!20}\textcolor{blue}{91.55\%} & \cellcolor{gray!45}\textcolor{blue}{99.53\%} & \cellcolor{gray!60}\textcolor{blue}{99.67\%} \\ &
 \multicolumn{1}{l}{Avg. Rewards} & \cellcolor{gray!20}\textcolor{blue}{20.99} & \cellcolor{gray!45}\textcolor{blue}{20.97} & \cellcolor{gray!60}\textcolor{blue}{20.88} & \cellcolor{gray!20}\textcolor{blue}{20.93} & \cellcolor{gray!45}\textcolor{blue}{20.90} & \cellcolor{gray!60}\textcolor{blue}{19.72} & \cellcolor{gray!20}\textcolor{blue}{20.91} & \cellcolor{gray!45}\textcolor{blue}{19.73} & \cellcolor{gray!60}\textcolor{blue}{19.70}
 \\\hline\hline
 \multirow{3}{*}{Multi-shot} &
 \multicolumn{1}{l}{Learning} & 
 \cellcolor{gray!20}\textcolor{red}{78.93\%} & 
 \cellcolor{gray!45}\textcolor{red}{93.52\%} & \cellcolor{gray!60}\textcolor{red}{97.96\%} & \cellcolor{gray!20}\textcolor{red}{93.18\%} & \cellcolor{gray!45}\textcolor{red}{97.61\%} & \cellcolor{gray!60}\textcolor{red}{99.65\%} & \cellcolor{gray!20}\textcolor{red}{97.54\%} & \cellcolor{gray!45}\textcolor{red}{99.22\%} & \cellcolor{gray!60}\textcolor{red}{99.93\%} \\ &
 \multicolumn{1}{l}{Testing} & \cellcolor{gray!20}\textcolor{red}{77.4\%} & \cellcolor{gray!45}\textcolor{red}{92.34\%} & \cellcolor{gray!60}\textcolor{red}{97.8\%}& \cellcolor{gray!20}\textcolor{red}{97.04\%} & \cellcolor{gray!45}\textcolor{red}{99.11\%} & \cellcolor{gray!60}\textcolor{red}{99.61\%} & \cellcolor{gray!20}\textcolor{red}{99.15\%} & \cellcolor{gray!45}\textcolor{red}{99.33\%}& \cellcolor{gray!60}\textcolor{red}{99.89\%} \\ 
 &
 \multicolumn{1}{l}{Avg. Rewards} & \cellcolor{gray!20}\textcolor{red}{79.03} & \cellcolor{gray!45}\textcolor{red}{55.98} & \cellcolor{gray!60}\textcolor{red}{49.84} & \cellcolor{gray!20}\textcolor{red}{49.29} & \cellcolor{gray!45}\textcolor{red}{39.65} & \cellcolor{gray!60}\textcolor{red}{38.84} & \cellcolor{gray!20}\cellcolor{gray!20}\textcolor{red}{40.55} &\cellcolor{gray!45}\textcolor{red}{39.62} & \cellcolor{gray!60}\textcolor{red}{20.74}
 \\
 \hline
\end{tabular}
\end{adjustbox}
\caption{Simulation results for
the task
$[H^1 P]^{[0,8]} \cdot ([H^1 D_1]^{[0,6]})\cdot ([H^1 D_2]^{[0,6]}) \; \vee \; [H^1 D_3]^{[0,6]}) \cdot [H^1 Base]^{[0,12]}$ and the real action uncertainty of $\epsilon_{real}=0.03$ 
for the one-shot and the multi-shot learning algorithms. The first two rows for each algorithm present the satisfaction ratio of the constraint over 1000000 learning episodes and 10000 testing episodes respectively. The third row for each algorithm shows the average episodic rewards for 10000 episodes during testing.}
\label{table:result}
\end{table*}

\section{Simulation Results}
%In this section, we  implemented on Python 3.10 on a PC with an Intel i5-8265U CPU at 1.6 GHz processor and 16.0 GB RAM. 
We present a case study where a robot operates on a 6$\times$6 grid environment with an action set $A = \{N,NE,E,SE,S,SW,W,NW,Stay\}$ as shown in Fig.~\ref{fig:case}. Under these actions, the agent can either move to any of the feasible adjacent cells in the 8 directions or stay at its current location. The transition probability (unknown to the agent) is as follows: Each of the first 8 actions leads to the intended transition with a probability of 0.97 or to one of the other feasible transitions (selected uniformly at random) with a probability of 0.03. For example, if the agent takes the action $E$, it will move to the adjacent cell in the direction $E$ with a probability of 0.97 (intended transition) or, with a probability of 0.03, it will stay at its location or move to a feasible adjacent cell in any of the other 7 directions (unintended transitions). If the agent takes the action $Stay$, it will stay at the original position with probability 1. In this case study, the lower bound and upper bound in \eqref{not_really_an_assumption} is given by introducing an overestimated transition uncertainty $\epsilon \geq 0.03$. More specifically, the available prior information is that each action yields the corresponding intended transition with some probability in $[1-\epsilon, 1]$. Furthermore, each of the unintended transitions has a probability in $[0,\epsilon]$. 

We consider a scenario where the agent is required to periodically perform a pickup and delivery task while maximizing situational awareness by collecting measurements from the environment. The pickup and delivery task is encoded as a TWTL\footnote{The syntax and semantics of TWTL can be found in the Appendix.} constraint: $[H^1 P]^{[0,8]} \cdot ([H^1 D_1]^{[0,6]})\cdot ([H^1 D_2]^{[0,6]}) \; \vee \; [H^1 D_3]^{[0,6]}) \cdot [H^1 Base]^{[0,12]}$, which means that \textit{``go to the pickup location $P$ and stay there for $1$ time step in the first $8$ time steps and immediately after that go to $D_1$ and stay there for $1$ time step within $6$ time steps, and immediately after that go to either $D_2$ or $D_3$ within $6$ time steps and stay there for $1$ time step, and immediately after that go to $Base$ and stay there for $1$ time step within $12$ time steps."}. Based on the time bound of this TWTL specification, the length of each episode is selected as $35$ time steps. 
%  Accordingly, the reward $r_t$  represents the value of monitoring the agent's current position on the grid and the discount factor in \eqref{pistareq} is selected as  $\gamma = 0.95$.
% In Fig.~\ref{fig:sim_benchmarks}, the light gray cells, the dark gray cell, and all the other cells yield a reward of $1$, $10$, and $0$, respectively. 

In Fig.~\ref{fig:case}, we present two sample trajectories by applying the learned policies obtained by the one-shot and multi-shot algorithms, with $\epsilon = 0.08$ and $Pr_{des}=0.9$. Following the one-shot policy, the robot adopts the $\pi_C$ policy at $t=4$ until it satisfies the TWTL constraint. Though the one-shot algorithm provides a constraint satisfaction guarantee, following the $\pi_C$ policy until satisfying the constraints prevents the agent from learning a policy to visit the high-reward location. The multi-shot algorithm relaxes such a requirement by allowing the agent to explore before satisfying the constraints (lines 15-16 in Alg.~2b) and can potentially learn a policy yielding higher reward than the one-shot algorithm.

\begin{figure}[ht]
    \centering
    %% %\vspace{0.3cm}
    \hspace*{-4cm}
    \subfigure[]{\includegraphics[width=0.16\textwidth]{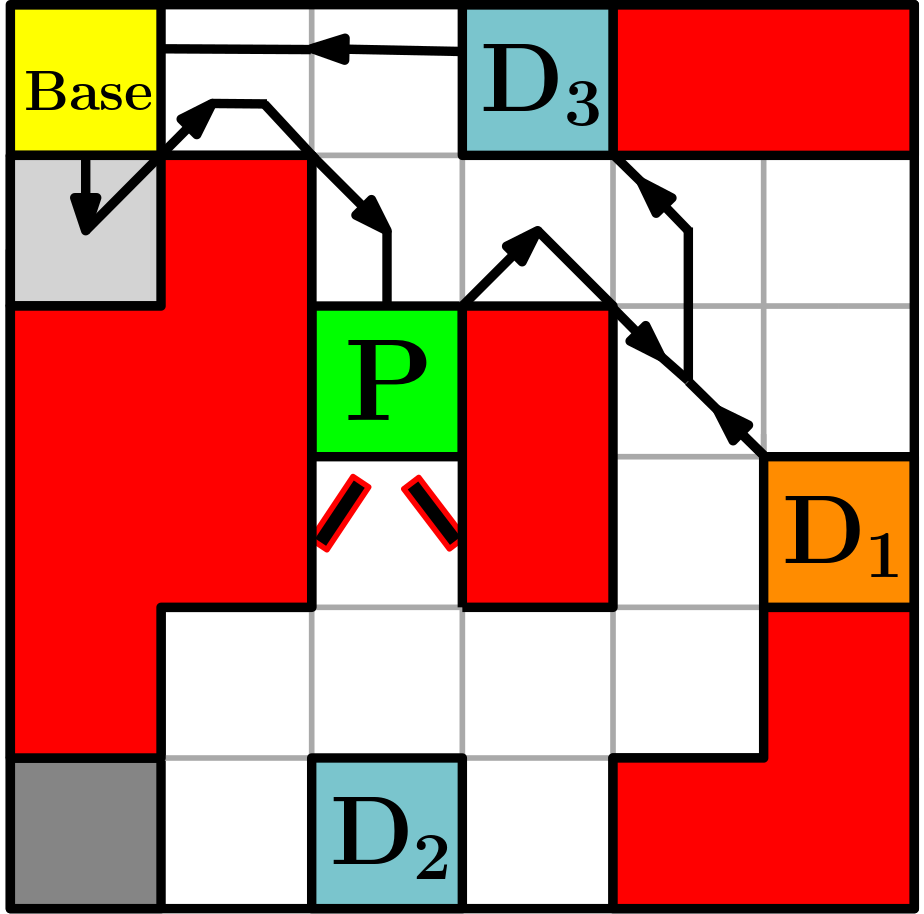}}
    % \hspace{1cm}
    % \subfigure[]{\includegraphics[width=0.16\textwidth]{}}
    % \hspace{1cm}
    % \subfigure[]{\includegraphics[width=0.16\textwidth]{}}
    \hspace{1cm}
    \subfigure[]{\includegraphics[width=0.16\textwidth]{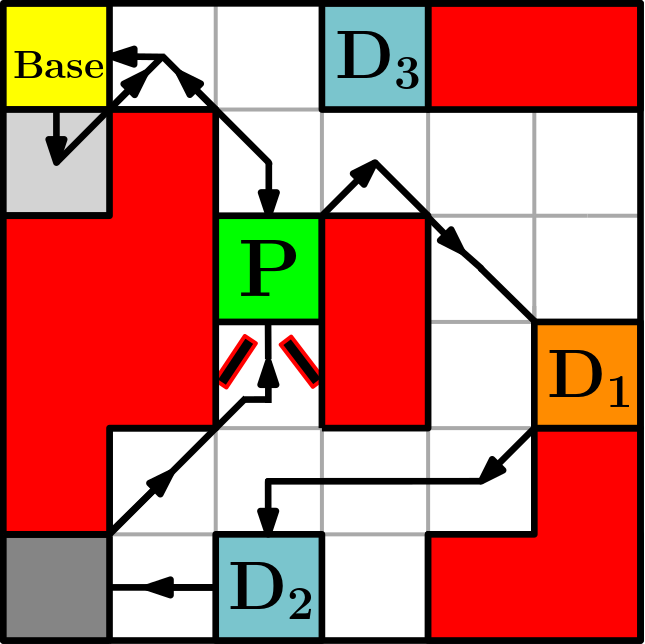}}
    \hspace*{-4cm}
    \caption{The gray cells (darker shade indicates higher reward) are the regions where monitoring is rewarded. The black bars under the P location represent a one-way door that only allows the "North" action. The black arrows denote sample trajectories obtained from the two algorithms with $(\epsilon, Pr_{des}) = (0.08, 0.9)$: (a) one-shot and (b) multi-shot.}
    \label{fig:case}
    % %\vspace{-4mm}
\end{figure}

We investigate how the parameters $\epsilon$ and $Pr_{des}$ influence the performance of the one-shot and multi-shot algorithms. The proposed algorithms are executed with varying values of $Pr_{des} =0.5,0.7, 0.9 $ and $\epsilon =0.03,0.08,0.13$. Since the constraint involves 4 sub-tasks, for the multi-shot algorithm, the set of time for dividing the time-total product MDP is chosen as $\{t_0=0, t_1 = 8, t_2 = 15,t_3 = 22, t_4 = 35\}$ where $t_1$, $t_2$, $t_3$, and $t_4$ are the deadline by which the sub-tasks must be completed. The set of desired probabilities for the multi-shot algorithm is selected as $\{\sqrt[\uproot{3}4]{Pr_{des}}, \sqrt[\uproot{3}4]{Pr_{des}}, \sqrt[\uproot{3}4]{Pr_{des}}, \sqrt[\uproot{3}4]{Pr_{des}} \}$.
Note that the selection of the set of time steps and the set of desired probabilities is not unique, as long as the conditions in Theorem 2 hold. The results are shown in Table \ref{table:result}.

 The performance of the multi-shot algorithm is significantly influenced by the parameters $\epsilon$ and $Pr_{des}$. For a fixed $Pr_{des}$, we observe that the satisfaction rate increases with $\epsilon$, while the average rewards decrease with $\epsilon$. Since we use the transition uncertainty $\epsilon$ to determine the lower and upper bounds of the transition probability of the system and solve for the worst-case maximum satisfaction probability, we will obtain lower probability values with a higher $\epsilon$. Therefore, more actions will be pruned (see Alg.~1a line 6) and the RL agent is more likely to adopt the $\pi_C$ policy which maximizes the satisfaction probability. The more confined action set resulting from higher $\epsilon$ also explains the decrease in average rewards. Similarly, for a fixed $\epsilon$, we notice that a higher $Pr_{des}$ results in a higher satisfaction rate and lower rewards. A higher $Pr_{des}$ implies pruning more actions (see Alg.~1a line 6), which also explains the increase in satisfaction rate and decrease in average rewards. 
 
 The $\epsilon$ and $Pr_{des}$ parameters have no significant influence on the average rewards and satisfaction rate in the one-shot algorithm. This is due to the fact that the agent has to adopt the $\pi_C$ policy in the earlier stage of the episodes regardless of the $Pr_{des}$ and $\epsilon$ parameters. Otherwise, it will not be able to reach the P location before $t=8$ and will fail the constraints. As a result, the agent adopts the $\pi_C$ policy in the early stage until it satisfies or fails the TWTL constraint. Therefore, the one-shot algorithm learns a similar policy under different parameters (as in Fig.~\ref{fig:case}a).

\section{Conclusion}

We presented a constrained RL algorithm for keeping the probability of satisfying a bounded TL constraint above a desired threshold throughout learning. The proposed method is based on integrating the total automaton representation of the constraint into the underlying MDP and avoiding ``unsafe" actions based on some available prior information given as upper and lower bounds for each transition probability.  We theoretically showed that, under some conditions on the desired probability threshold and the MDP, the proposed approach ensures the desired probabilistic constraint satisfaction throughout learning. We also provided numerical results to demonstrate the performance of our proposed approach. As a future research, we plan to investigate the optimal implementation of the multi-shot algorithm; i.e., the optimal decomposition of the time and probability sets.

\bibliographystyle{IEEEtran}
\bibliography{refer}

\appendix
\noindent \textit{Time-Window Temporal Logic \cite{twtl}:}
Let $AP$ be a set of atomic propositions, each of which has a truth value over the state-space. A TWTL formula is defined over the set $AP$ with the following syntax:
\vspace{2mm}

\centerline{$\phi ::= H^d x | H^d \notltl x | \phi_1 \andltl \phi_2 | \phi_1 \orltl \phi_2  | \notltl \phi_1 |  \phi_1 . \phi_2 | [\phi_1]^{[a,b]}$, where}
\begin{itemize}
\item $x$ is either the true constant $\top$ or an atomic proposition from $AP$;
\item $\andltl$, $\orltl$, and $\notltl$ are the conjunction, disjunction, and negation Boolean operators, respectively;
\item $\cdot$ is the concatenation operator;
\item $H^d$ with d $\in$ $\mathbb{Z}_{\geq 0}$ is the hold operator;
\item $[]^{[a,b]}$ with $0 \leq a \leq b$ is the within operator. 
\end{itemize}

\vspace{1mm}
The semantics of TWTL is defined according to the finite words $\mathbf{o}$ over $AP$, and $o(k)$ refers to the $k^{th}$ element on $\mathbf{o}$. For any $x \in AP$,  
the {\em hold} operator $H^d x$ indicates that $x$ should be true (serviced) for $d$ time units (i.e., $o \models H^d x$ if $o(t)=x \;\; \forall t\in [0,d]$).
The {\em within} operator $[\phi]^{[a, b]}$ means that the satisfaction of $\phi$ is bounded to the time window $[a, b]$ (i.e., $o \models [\phi]^{[a, b]}$ if $\exists k \in (0,b-a) \text{ s.t. } o^{\prime} \models \phi$ where $o^{\prime} = o(a+k)\dots o(b)$).
Finally, the concatenation of $\phi_i$ and $\phi_j$ (i.e., $\phi_i \cdot \phi_j$) designates that the first $\phi_i$ must be satisfied and then immediately after that
$\phi_j$ must be satisfied.
\end{document}